\def\eqref#1{equation~\ref{#1}}
\def\1{\bm{1}}
\DeclareMathAlphabet{\mathsfit}{\encodingdefault}{\sfdefault}{m}{sl}
\SetMathAlphabet{\mathsfit}{bold}{\encodingdefault}{\sfdefault}{bx}{n}
\theoremstyle{plain}
\newtheorem{theorem}{Theorem}[section]
\newtheorem{proposition}[theorem]{Proposition}
\theoremstyle{definition}
\newtheorem{assumption}[theorem]{Assumption}
\theoremstyle{remark}
\definecolor{mygray}{gray}{.9}
\title{Half-order Fine-Tuning for Diffusion Model: A Recursive Likelihood Ratio Optimizer}
\author{Tao Ren$^{}$\footnotemark[1]\;\;
Zishi Zhang$^{}$\;\;
Jinyang Jiang$^{}$\;\;
Zehao Li$^{}$\;\;
Shentao Qin$^{}$\;\;
Yi Zheng$^{}$\;\;
Guanghao Li$^{}$\;\; \\
\textbf{
Qianyou Sun$^{}$\;\;
Yan Li$^{}$\;\;
Jiafeng Liang$^{}$\;\;
Xinping Li$^{}$\;\;
Yijie Peng$^{}$\footnotemark[2]
}
\\
\\
Peking University
}
\begin{document}

\renewcommand{\thefootnote}{\fnsymbol{footnote}} 
\footnotetext[1]{\ Email to: \texttt{rtkenny@stu.pku.edu.cn}.}
\footnotetext[2]{\ Corresponding author: \texttt{pengyijie@pku.edu.cn}.}

\maketitle

\begin{figure*}[h!]
    \vspace{-0.5cm}
    \centering
    \includegraphics[width=\linewidth]{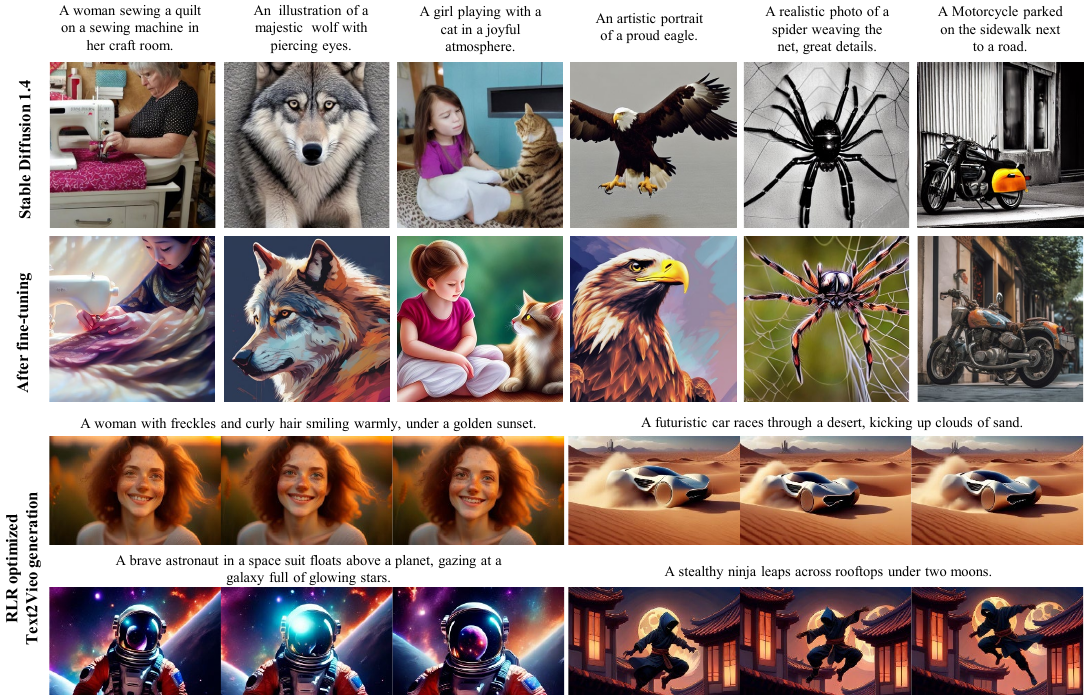}
    \caption{Image and video samples generated from the model fine-tuned by our RLR optimizer. Please refer to Appendices \ref{Qualitative Results of Text2Image}, \ref{Qualitative Results of Text2Video}, and \ref{detail_DCoT} for more qualitative examples.}
    \label{fig:teaser}
\end{figure*}

\begin{abstract}
The probabilistic diffusion model (DM), generating content by inferencing through a recursive chain structure, has emerged as a powerful framework for visual generation. After pre-training on enormous data, the model needs to be properly aligned to meet requirements for downstream applications. How to efficiently align the foundation DM is a crucial task. Contemporary methods are either based on Reinforcement Learning (RL) or truncated Backpropagation (BP). However, RL and truncated BP suffer from low sample efficiency and biased gradient estimation, respectively, resulting in limited improvement or, even worse, complete training failure. To overcome the challenges, we propose the Recursive Likelihood Ratio (RLR) optimizer, a Half-Order (HO) fine-tuning paradigm for DM. The HO gradient estimator enables the computation graph rearrangement within the recursive diffusive chain, making the RLR's gradient estimator \emph{an unbiased one with lower variance} than other methods. We theoretically investigate the bias, variance, and convergence of our method. Extensive experiments are conducted on image and video generation to validate the superiority of the RLR. Furthermore, we propose a novel prompt technique that is natural for the RLR to achieve a synergistic effect.

\end{abstract}

\section{Introduction}

Probabilistic diffusion model (DM) \citep{sohl2015deep, ddpm, lzh_friend,gao2025toward} has emerged as a transformative framework in high-fidelity data generation, demonstrating {\color{black}unmatched} capabilities in diverse applications such as image synthesis \citep{podell2023sdxl}, video generation \citep{modelscope}, and multi-modal data modeling.
These models operate by recursively denoising latent representations, as shown in Figure \ref{fig:recur_paradigm}, effectively capturing complex data distributions. However, fine-tuning DMs in the post-training phase remains a daunting challenge, since the gradient estimation through the recursive structure imposes excessive computation overhead \citep{draft}. This challenge has limited the broader deployment of DM in dynamic and resource-constrained environments.

\begin{wrapfigure}{r}{0.35\linewidth}
    \centering
    \vspace{-0.4cm}
    \includegraphics[width=\linewidth]{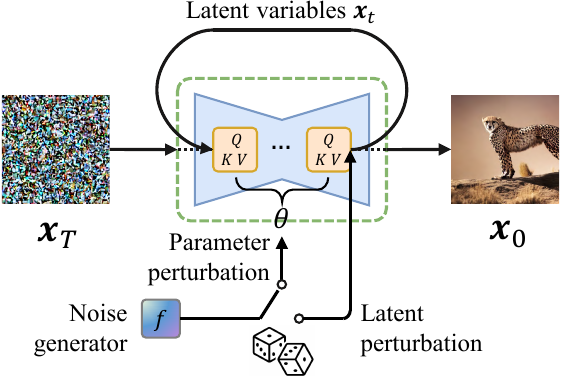}
    \caption{The recursive structure of diffusion models for gradient estimation.}
    \label{fig:recur_paradigm}
    \vspace{0.1cm}
    \centering
    \includegraphics[width=\linewidth]{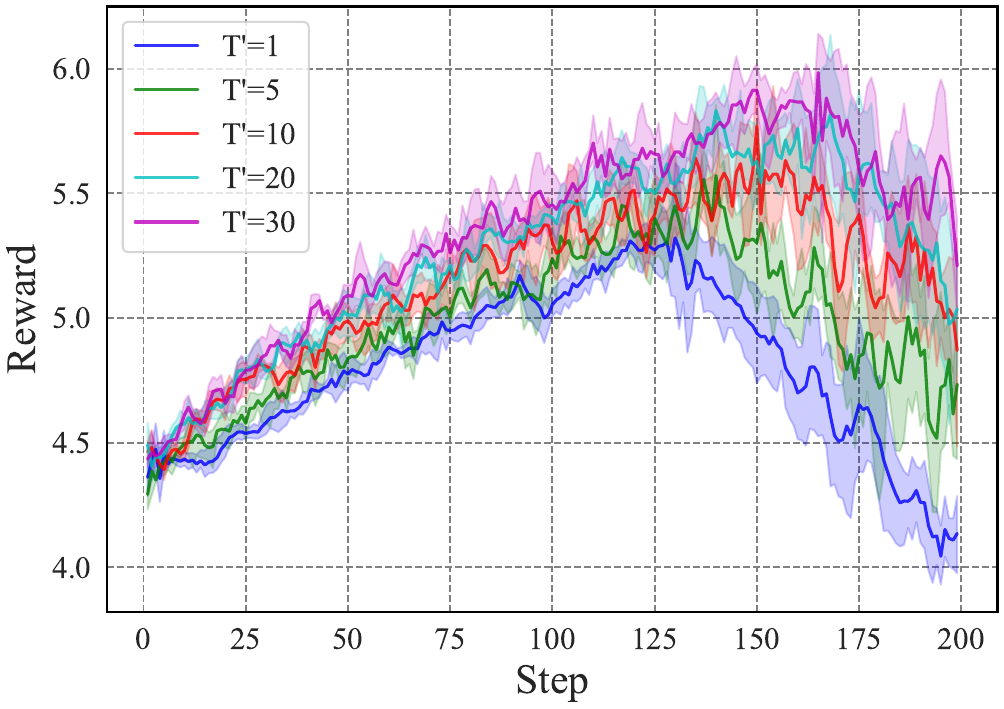}
    \vspace{-0.3cm}
    \caption{Model collapse caused by the truncation: training SD 1.4 on the aesthetic reward model by truncated BP.}
    \label{fig:model_collapse}
    \vspace{-0.4cm}
\end{wrapfigure}
It is a natural way to fine-tune DMs via full backpropagation (BP) through all time steps \citep{BP}{\color{black}, which is theoretically functional}, providing precise gradient estimation over the entire diffusion chain. However, the computational and memory overhead of BP scales prohibitively with the model size and the number of diffusion steps \citep{alignprop, yuan2024instructvideo, draft, vader}, making full BP impractical for most real-world scenarios. Specifically, training Stable Diffusion 1.4 by full BP with a batch size of 1 and 50 time steps would require approximately 1TB of GPU RAM \citep{alignprop}. Thus, truncating recursive differentiation becomes a common practice to alleviate memory overhead. But truncated BP suffers from structural bias, as it terminates gradient computation before sourcing to the input, only considering a limited subset of the diffusion chain.
Fine-tuning based on a biased gradient estimation can inadvertently impair the optimization performance, resulting in \textbf{model collapse}, i.e., contents generated reducing to pure noise. Empirical evidence, in Figure \ref{fig:model_collapse}, shows that truncated gradients result in a significant drop in reward scores during training: the fewer the truncated time steps, the more severe the model collapse. Moreover, truncated BP \textbf{fails to capture the multi-scale information} across all time steps due to the absence of differentiation on early steps. The hierarchical feature of generation imposes a thorough gradient evaluation to retain fidelity from pixel to structural level. 

Reinforcement learning (RL) \citep{ppo} as a gradient computation trick has enabled another branch of DM fine-tuning \citep{rwr, ddpo, diffusion_dpo, RL_DPOK}. It typically ignores the differentiable connection between steps and recovers the gradient by estimation. RL avoids caching intermediate activations, significantly reducing memory requirements. It also supports gradient computation in a divided manner under extreme circumstances to accommodate insufficient memory. The cost to pay is the high variance of the estimated gradient. Even if the estimator is unbiased, the variance can result in wild sample-inefficient updates, demonstrated by the slow convergence during training.

These limitations of BP and RL underscore the necessity of a more efficient, scalable, and stable fine-tuning approach that harmonizes computational tractability with optimization efficacy. To address this, we first investigate the recursive architecture of the DM and propose the problem of finding the minimal variance gradient estimator. Informed by perturbation-based estimation using Likelihood Ratio (LR) techniques \citep{oneforward, ren2024flops} (detailed reviews of LR techniques are provided in the Appendix \ref{ex_related_work}), we propose the Recursive Likelihood Ratio (RLR) optimizer. By utilizing the inherent noise in the DM, a local computational graph is enabled for pathwise gradient estimation as shown in Figure \ref{fig:algdiagram}, which can reduce the variance and better capture the multi-scale information. The RLR estimator is \textbf{unbiased and has lower variance under the same computation budgets} as other methods. Through perturbation-based computational graph rearrangement, the RLR mitigates the structural bias of truncated BP and the high variance of RL, capable of better capturing multi-scale visual information. Our optimizer shares similarity with zeroth-order optimizer since both need perturbation to estimation gradient, but the RLR utilize a local BP chain to enable low-variance estimation. Therefore, we name it as \textbf{Half-Order} optimizer. Our contributions are threefold:
\begin{itemize}
    \item We provide a systematic analysis of gradient estimation in DMs, identifying a structured design space of estimators. Then, we formulate the RLR optimizer in the design space under the protocol of minimizing variance with a limited computational budget. 
    \item Extensive evaluation of Text2Image and Text2Video tasks are conducted. RLR consistently achieves higher reward scores across multiple human preference reward models and outperforms SOTA video models on the VBench benchmark. Furthermore, we propose a prompt technique for our RLR, validating the intuition and applicability of our method.
    \item 
    We conduct a rigorous theoretical analysis of RLR’s design, proving its unbiasedness, bounding its estimator variance, and establishing convergence guarantees. These results formally justify the empirical success of RLR and explain the deficiency of prior methods.
\end{itemize}
\begin{figure*}[t]
    \vspace{-0.3cm}
    \centering
    \includegraphics[width=0.95\linewidth]{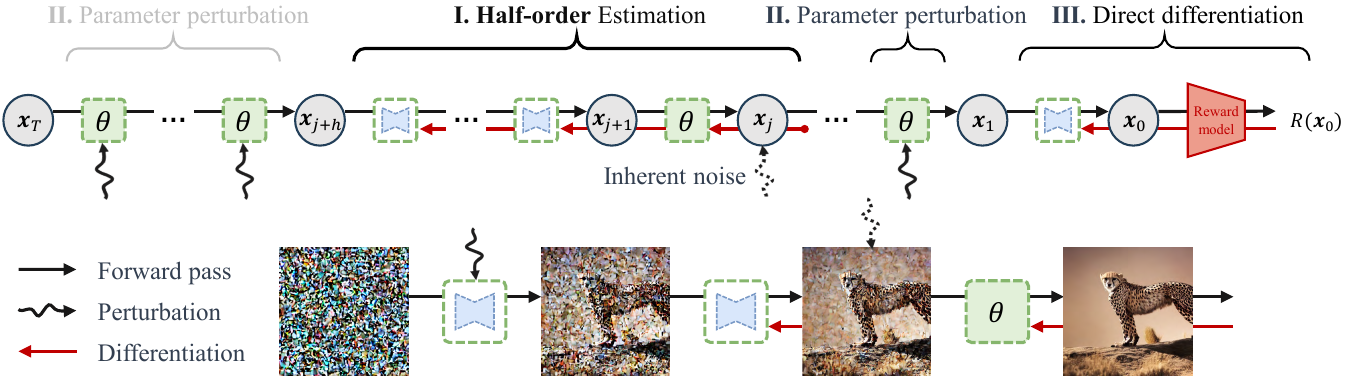}
    \caption{The computation paradigm of the RLR optimizer.}
    \label{fig:algdiagram}
    \vspace{-0.5cm}
\end{figure*}

\section{Related Work}

Diffusion probabilistic models have achieved state-of-the-art performance in multi-modal generation \citep{ddpm, LDM}. However, aligning pre-trained DMs for downstream tasks remains challenging. RL and supervised post-training have been widely explored to incorporate human preferences or task-specific objectives, with recent methods including RLHF-inspired approaches, DPO variants, and reward-model-guided fine-tuning \citep{RLHF, RL_DPOK, diffusion_dpo, alignprop}. Yet, RL-based fine-tuning often suffers from high variance and sample inefficiency, while truncated BP reduces memory cost but introduces structural bias that can lead to model collapse\citep{alignprop,vader,imagereward}. Recent forward-learning methods based on stochastic gradient estimation offer a promising alternative with lower computational and memory costs \citep{salimans2017evolution, peng2022new, deepzero}. Please refer to Appendix \ref{ex_related_work} for extended related works.

\section{Unbiased Minimal Variance Gradient Estimator for Diffusion Model}

\subsection{Problem Formulation}\label{sec:problem-formulation}

DMs generate data by transforming a noise sample $x_T$ into a data sample $x_0$ through a recursive denoising process, as shown in Figure \ref{fig:recur_paradigm}. This generation proceeds through $T$ steps of stochastic updates, each governed by a parameterized mapping $\phi_t$, where the implementation of $\phi_t$ may vary across steps, yielding the transformation:
\begin{equation}
    x_{t-1} = \phi_t(x_t, z_t; \theta)
\end{equation}
where $\theta$ denotes the model parameter, and $z_t = \sigma_t \epsilon_t, \epsilon_t \sim \mathcal{N}(\bm{0},I)$ is a stochastic perturbation. The noise term $z_t$ can be either the inherent noise in the DM, i.e., $x_{t-1}=\varphi(x_t;\theta)+z_t$, or the injected noise to the parameter, i.e., $x_{t-1}=\varphi(x_t; \theta+z_t)$. The function $\varphi$ denotes the shared backbone network of the DM, reused across different steps, while $\phi_t$ should be used with the subscript $t$ to indicate which time-step it operates on. The full generation chain can be represented recursively as:
\begin{equation}
    \label{eq:recursive_structure}
    \begin{aligned}
        &x_0 = \phi_{1:T}(x_T, z_{1:T}; \theta) = \phi_1 \circ \phi_2 \circ \dots \circ \phi_T(x_T, z_{1:T}; \theta)=\\
        &\phi_1(\phi_{2:T}(x_T,z_{2:T};\theta),z_1;\theta) = \phi_1(\phi_{2:T-1}(\phi_T(x_{T},z_T;\theta),z_{2:T-1};\theta),z_1;\theta),\\
    \end{aligned}
\end{equation}
where $z_{1:T}$ are independent noise terms injected at each step. Once a pre-trained model is available, a reward model $R(x_0)$ is introduced to guide post-training towards generating samples of higher semantic or perceptual quality. The resulting fine-tuning objective becomes:
\begin{align}
    \max_\theta \mathbb{E}[R(x_0)]=\max_\theta \mathbb{E}_{z_{1:T}}[R(\phi_{1:T}(x_T,z_{1:T};\theta))].
\end{align}
Fine-tuning DMs through the recursive structure poses a critical challenge: how to efficiently and accurately estimate the gradient of the expected reward with respect to model parameters. A natural objective is to construct a \emph{gradient estimator} for $ \mathbb{E}[R(x_0)]$ that is \textbf{unbiased} and has \textbf{minimal variance}, under a given computational and memory budget. We formulate the problem as the following constrained optimization problem over the space of gradient estimators:
\begin{equation}
    \label{eq:min-var-formulation}
    \min_{G \in \mathcal{G}} \quad \operatorname{Var}(G) \qquad
    \text{s.t.} \quad \nabla_\theta \mathbb{E}[R(x_0)] = \mathbb{E}[G], \quad \mathcal{C}(G) \leq \mathcal{B},
\end{equation}
where $\mathcal{G}$ denotes the space of all \emph{unbiased} gradient estimators $G$ for the objective $\mathbb{E}[R(x_0)]$, with the sample $x_0$ being obtained via a generative chain defined by $x_0 = \phi_{1:T}(x_T; z_{1:T}, \theta)$. The term $\operatorname{Var}(G)$ refers to the variance of the estimator, which we aim to minimize. The cost function $\mathcal{C}(G)$ quantifies the total computational and memory overhead incurred by the estimator $G$, typically measured in terms of the length of backward passes or the volume of intermediate activations stored. Finally, $\mathcal{B}$ represents the computational budget available for gradient estimation.

\subsection{$\mathcal{G}$: The Unbiased Gradient Estimator Design Space in Diffusion Models}
We now characterize the feasible design space $\mathcal{G}$ for unbiased gradient estimators in DMs. Due to the recursive structure of the generative chain, the gradient estimator must propagate through all $T$ steps. At each step $t$, we may choose one of three gradient estimation strategies:
\begin{itemize}
    \item \textbf{First-Order (FO)} uses exact backpropagation through $\phi_t$. \textbf{Zeroth-Order (ZO)} perturbs the parameters directly, e.g., $\varphi_t(x_t; \theta + \sigma_t \epsilon_t)$, and estimates the gradient via {\color{black}$\frac{R(\varphi(\cdot;\theta+\sigma_t \epsilon_t))}{\sigma_t}\epsilon_t$}, based solely on function evaluations \citep{salimans2017evolution}.
    
    \item We propose unbiased \textbf{Half-Order (HO)} gradient estimator which utilizes the inherent noise $z_t$ (rather than extrinsic perturbation) and applies the Likelihood Ratio technique, producing an estimator of the form: $R(x_0) \cdot D_\theta^\top \phi_{t:t+h-1} \cdot \nabla \log f(z_t)$, where $D_\theta \phi_{t:t+h-1}$ denotes the Jacobian of a local sub-chain of length $h$, and $f(\cdot)$ denotes the noise density. The HO allows a $h$-length sub-chain, starting from any $t$ in the diffusive chain. RL is a special case of HO method with $h=1$.

\end{itemize}
The full estimator design space is thus defined as:
\begin{equation}
    \label{eq:composite_space}
    \mathcal{G}_\text{full} := \{ G = (g_1, \dots, g_T) \mid g_t \in \{ \text{FO}, \text{HO}, \text{ZO} \} \;\; \forall 1 \leq t \leq T \},
\end{equation}

\begin{wraptable}{r}{0.5\linewidth}
    \vspace{-0.45cm}
    \centering
    \setlength{\tabcolsep}{4pt}
    \small
    \caption{Comparison of gradient estimators.} 
    \vspace{-0.1cm}
    \begin{tabular}{cccc}
    \toprule
    \textbf{Method}  & \textbf{Unbiasedness} & \textbf{Variance} & \textbf{Memory} \\
    \midrule
    First-order & \checkmark & Small & Large \\
    Half-order & \checkmark & Medium & Medium \\
    Zeroth-order & \checkmark & Large & Small  \\
    Truncated BP & $\times$ & Small & Medium  \\
    \bottomrule
    \end{tabular}    
    \label{tab:var_mem_prop}
    \vspace{-0.3cm}
\end{wraptable}
where each sequence $G \in \mathcal{G}_{\text{full}}$ represents a composite estimator composed of local choices at each time step. Table~\ref{tab:var_mem_prop} summarizes the variance, unbiasedness, and memory costs of each strategy: \textbf{FO has the lowest variance but highest cost, ZO is the cheapest but suffers from the highest variance, and HO balances the two.} The theoretical underpinnings of these variances, unbiasedness, and memory comparisons are formalized in Section~\ref{sec:theory}, and constitute an independent contribution of this work.

\section{Solving the Variance Minimization Problem: Recursive Likelihood Ratio Optimizer}

\subsection{Recursive Likelihood Ratio Optimizer}
Having characterized the decision space $\mathcal{G}_{\text{full}}$, we now turn to solving the variance minimization problem (\ref{eq:min-var-formulation}). Due to practical constraints, we further reduce the full design space by enforcing the following structure:
(i) All HO estimators should be connected in the chain, since separating the HO path would incur high variance.
(ii) FO should be directly connected to the reward model, according to its definition. This way, the decision space $\mathcal{G}_{\text{full}}$ is reduce to $\mathcal{G}_{\text{RLR}}=\{(g_1^{\text{FO}},\cdots,g_j^{\text{HO}},\cdots,g_{j+h}^{\text{HO}},\cdots,g_T^{\text{ZO}}) \mid 1\leq j \leq T-h\}$, which consists of one FO at the first step, a HO sub-chain of length $h$ starting at step $j$, and ZO estimators for all remaining steps to ensure unbiasedness. Under this specific structure, the decision variables are reduced to the length $h$ and the starting index $j$.
Each solution in $\mathcal{G}_{\text{RLR}}$
  is referred to as a Recursive Likelihood Ratio (RLR) estimator (see Figure \ref{fig:algdiagram}), which takes the form
\begin{equation}
    \begin{aligned}
        G&=\underbrace{{D^{\top}_{\theta}\phi(x_1,z_1;\theta)}\frac{dR(x_0)}{dx_0}}_{\mathrm{One-step\ first-order\ estimator}}\\
        &-\underbrace{R(x_0){D^{\top}_\theta\phi_{j:j+h}(x_{j+h},z_{j:j+h};\theta)}\nabla_z\ln f(z_j)}_{h-\mathrm{length\ half-order \ estimator}}-\underbrace{\sum_{i \in C}R(x_0)\nabla_z\ln f(z_i)}_{\mathrm{zeroth-order\ estimator}},
    \end{aligned}
    \label{equ:RLR_estimator}
\end{equation}
where $j \in [1, T - h] \cap \mathbb{Z}$ , $C=\{1, 2, \dots, T\} \setminus \{j, j+1, \dots, j+h\}$, $z_j\sim\mathcal{N}(\bm{0},\sigma_j I)$, and $z_i\sim\mathcal{N}(\bm{0},\sigma_iI)$ for $i\in C$.
\paragraph{Differentiating the reward model.} In the first part of the RLR estimator, we apply the FO estimator to the first time step to directly backpropagate through the reward model, avoiding black-box treatment (e.g., RL and ZO) and better leveraging its structure, as shown in Figure \ref{fig:algdiagram}.

\paragraph{Fixed-horizon half-order optimization.} 
The generation process of the DM follows a coarse-to-fine structure, with every time step in the chain controlling a different scale of generation. Incorporating precise gradient information from every time step is essential, but full BP is computationally prohibitive. Truncated BP introduces bias, while ZO and RL lead to high variance by ignoring structural information.
To address this, the RLR optimizer incorporates an HO $h$-length sub-chain, capturing multi-scale information while minimizing variance. Specifically, the starting index of HO, \( j \sim \mathcal{J}(1, T-h) \), is randomly selected across the whole diffusive chain, following a given distribution $\mathcal{J}$ (see Section 3.2). The inherent perturbation, $z_j$, enables the localized $h$-length sub-chain, $D_\theta \phi_{t:t+h-1}$, effectively capturing the visual scale information represented around that step (see Section 3.2 for choosing $h$).

\paragraph{Surrogate estimator via parameter perturbation.} For the remaining times steps, $C=\{1, 2, \dots, T\} \setminus \{j, j+1, \dots, j+h\}$, we inject noise directly into the model's parameters to construct ZO estimation to ensure unbiasedness. This approach is computationally cheap without caching intermediate latent variables.

\subsection{Optimizing $h$ and $j$: Variance–Memory Tradeoffs}
The remaining task is to optimize the two variables in the RLR estimator, $h$ and $j$, to solve the optimization problem (\ref{eq:min-var-formulation}).  Notably, the choice of $j$ does not directly affect this surrogate objective, but instead influences the ability to capture multi-scale information across different steps, so we treat its decision as a separate problem of interest.

\paragraph{Optimizing $h$.} To reduce the number of problem parameters that need to be estimated, we use an upper bound on the variance of the RLR estimator (see the Appendix \ref{variance_RLR}) as a surrogate objective:
\begin{equation}
    \label{equ:var_obj}
    \begin{aligned}
        \min_{h\in\mathbb{N}_0:\ G(h) \in\mathcal{G}_{\mathrm{RLR}}}&{\sum_{t=1}^{T}\text{Var}(g_t)} +2\sum_{t\neq t'}\sqrt{\mathrm{Var}(g_t)\mathrm{Var}(g_{t'})} \\
        \text{s.t.} &\quad \mathcal{B}_h h + \mathcal{B}_z (T-1-h) \le \mathcal{B},
    \end{aligned}
\end{equation}
where $h$ and $(T-1-h)$ are the number of steps for HO and ZO;  $\mathcal{B}_h$ and $\mathcal{B}_z$ are coefficients indicating the magnitude of the memory cost of HO and ZO per step. In practice, the available budget satisfies $\mathcal{B}_z T < \mathcal{B} < \mathcal{B}_h T$, meaning that using pure ZO underutilizes the budget, while using pure HO exceeds it. Let $V^2_h$ and $V^2_z$ denote the per-step variance of the HO and ZO, respectively. Since HO and FO have much lower variance than ZO, we use a common $V_h$ for both HO and FO, and assume $V_h \ll V_z$ and $T>2$. These conditions ensure the optimization problem admits the solution: \begin{equation}\label{equ:h_solution}
    h^\ast=\min\{ \lfloor\frac{\mathcal{B} - \mathcal{B}_z (T-1)}{\mathcal{B}_h - \mathcal{B}_z}\rfloor,\lfloor \frac{T
V_z
}{
2(V_z-V_h
)}-1\rfloor\}>0.
\end{equation}

We set $\mathcal{B}_h= 8 \text{GB}$ and $\mathcal{B}_z = 0.24\text{GB}$, which is supported by empirical evidence in Table \ref{tab:ablation_h} in the Appendix. If the memory budget $\mathcal{B}$ is between $30\text{GB}$ and $40\text{GB}$. It is recommended to set $h=2$. As the formula (\ref{equ:var_obj}) indicates, the variance decreases as $h$ increases. However, the performance exhibits diminishing improvement with increasing $h$. In other aspects, the memory consumption grows linearly with $h$, and the computational time grows even more rapidly. The above claims are corroborated by our ablation results in Table~\ref{tab:ablation_h}. Therefore, even with a larger memory budget, blindly increasing $h$ is not advisable. Moreover, since \( V_h \ll V_z \), the second term in~(\ref{equ:h_solution}) simplifies to approximately \( \frac{T}{2} - 1 \approx 24 \), which is typically larger than the first term. As a result, we have
\(
h^* = \left\lfloor {(\mathcal{B} - \mathcal{B}_z (T - 1))}/{\mathcal{B}_h - \mathcal{B}_z} \right\rfloor
\)
in practice, and there is no need to estimate \( V_h \) and \( V_z \).

\paragraph{Determining $j$.} We use the gradient norm to represent the importance of different steps. Then sample $j$ from the categorical distribution $j\sim \mathcal{J}(1, T-h)= \mathcal{CAT} (\text{softmax}(\Vert g_1 \Vert, \cdots, \Vert g_{T-h} \Vert))$.

\section{Experiments}
We verify the superiority and applicability of the RLR optimizer against various baselines on two generation tasks: Text2Image and Text2Video. We compare the RLR with the RL-based method (DDPO), and the truncated-BP-based methods (Alignprop and VADER). Moreover, other baselines, e.g, closed-source models, are also included. Finally, we propose a novel prompt technique that is natural for the RLR optimizer, demonstrating the enhanced capability of the proposed RLR optimizer to capture multi-scale information for visual generation. Ablations are included to verify the validity of the proposed RLR optimizer. Please refer to Appendix \ref{setting} for detailed settings.

\begin{table*}[th]
    \setlength{\tabcolsep}{4pt}
    \small
    \centering
    \caption{Text to Image reward score. We evaluate methods under different DM under different reward models. The higher the score, the better the performance.}
    \begin{adjustbox}{max width=\linewidth}
    \begin{tabular}{cc|cccc|cccc}
    \toprule
    \multirow{2}{*}{\textbf{Model}} & \multirow{2}{*}{\textbf{Methods}}
    & \multicolumn{4}{c}{\textbf{Pick-a-Pic}} & \multicolumn{4}{c}{\textbf{HPD v2}} \\
    & & PickScore & HPSv2 & AES & ImageReward & PickScore & HPSv2 & AES & ImageReward \\
    \midrule
    \multirow{4}{*}{\textbf{SD1.4}}
    & Base & 16.24 & 21.03 & 4.48 & 32.74 & 16.19 & 22.08 & 4.42 & 32.90 \\
    & DDPO & 17.56 & 23.15 & 5.47 & 49.33 & 17.53 & 22.79 & 5.52 & 52.06 \\
    & Alignprop & 18.08 & 26.64 & 5.91 & 65.07 & 19.17 & 27.02 & 6.02 & 67.18 \\
    & \cellcolor{mygray} \textbf{RLR} & \cellcolor{mygray} \textbf{20.14} & \cellcolor{mygray} \textbf{28.57} & \cellcolor{mygray} \textbf{6.53} & \cellcolor{mygray} \textbf{75.65} & \cellcolor{mygray} \textbf{21.38} & \cellcolor{mygray} \textbf{29.22} & \cellcolor{mygray} \textbf{6.65} & \cellcolor{mygray} \textbf{76.55} \\
    \midrule
    \multirow{4}{*}{\textbf{SD2.1}}
    & Base & 16.37 & 22.14 & 4.53 & 35.40 & 16.25 & 23.32 & 4.57 & 36.03 \\
    & DDPO & 17.70 & 24.55 & 5.58 & 52.48 & 17.43 & 24.56 & 5.62 & 52.85 \\
    & Alignprop & 19.23 & 27.20 & 6.07 & 68.09 & 21.60 & 27.40 & 6.11 & 72.62 \\
    & \cellcolor{mygray} \textbf{RLR} & \cellcolor{mygray} \textbf{22.58} & \cellcolor{mygray} \textbf{30.11} & \cellcolor{mygray} \textbf{6.76} & \cellcolor{mygray} \textbf{77.26} & \cellcolor{mygray} \textbf{23.22} & \cellcolor{mygray} \textbf{30.98} & \cellcolor{mygray} \textbf{6.94} & \cellcolor{mygray} \textbf{83.07} \\
    \bottomrule
    \end{tabular}\end{adjustbox}
    
    \label{tab:image}
    \vspace{-0.3cm}
\end{table*}

\begin{table*}[th]  
    \setlength{\tabcolsep}{4pt}
    \small
    \centering
    \caption{Text2Video Generation Evaluation on the Vbench. The weighted average is calculated by assigning a weight of 1 to all metrics, except for the Dynamic Degree metric, which is assigned a weight of 0.5.}
    \begin{tabular}{c|ccccccc}
    \toprule
    \multirow{2}{*}{\textbf{Methods}} & \textbf{Subject} & \textbf{Background} & \textbf{Motion} & \textbf{Dynamic} & \textbf{Aesthetic} & \textbf{Imaging} & \textbf{Weighted} \\
    & \textbf{Consistency} & \textbf{Consistency} & \textbf{Smoothness} & \textbf{Degree} & \textbf{Quality} & \textbf{Quality} & \textbf{Average} \\
    \midrule
    VideoCrafter & 95.44 & 96.52 & 96.88 & 53.46 & 57.52 & 66.77 & 79.97 \\
    Pika & 96.76 & \cellcolor{mygray} \textbf{98.95} & 99.51 & 37.22 & 63.15 & 62.33 & 79.87 \\ 
    Gen-2 & 97.61 & 97.61 & \cellcolor{mygray} \textbf{99.58} & 18.89 &\cellcolor{mygray} \textbf{66.96} & 67.42 & 79.75 \\ 
    T2V-Turbo & 96.28 & 97.02 & 97.34 & 49.17 & 63.04 & \cellcolor{mygray} \textbf{72.49} & 81.96 \\ 
    \midrule
    DOODL & 95.47 & 96.57 & 96.84 & 55.46 & 58.27 & 66.79 & 80.30 \\ 
    DDPO  & 95.53 & 96.63 & 96.92 & 58.29 & 59.23 & 66.84 & 80.78 \\ 
    VADER & 95.79 & 96.71 & 97.06 & 66.94 & 66.04 & 69.93 & 83.45\\ 
    \cellcolor{mygray} \textbf{RLR} & \cellcolor{mygray} \textbf{97.64} & 97.19 & 98.05 & \cellcolor{mygray} \textbf{70.69} &  66.15 & 71.08 & \cellcolor{mygray} \textbf{84.63} \\ 
    \bottomrule
    \end{tabular}
    
    \label{tab:vit}
    \vspace{-0.3cm}
\end{table*}

\begin{figure}[!h]
    \centering
    \begin{subfigure}{0.35\linewidth}
        \centering
        \includegraphics[width=\linewidth]{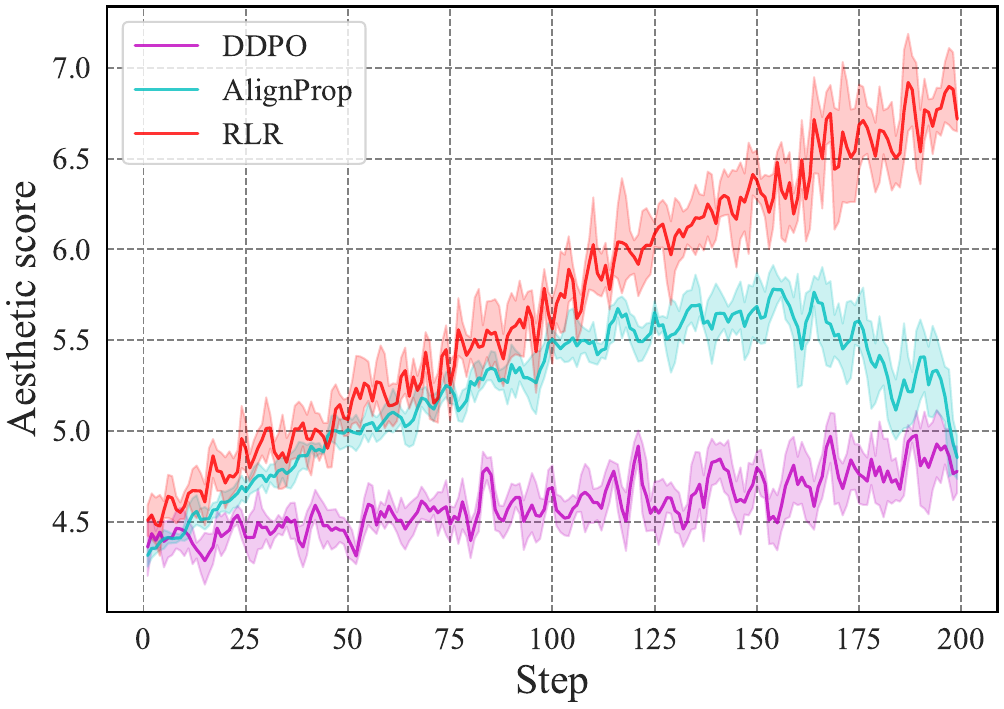}
        \caption{Reward curves for aesthetic score.}
        \label{fig:subfig1}
    \end{subfigure}
    \hspace{1cm}
    \begin{subfigure}{0.35\linewidth}
        \centering
        \includegraphics[width=\linewidth]{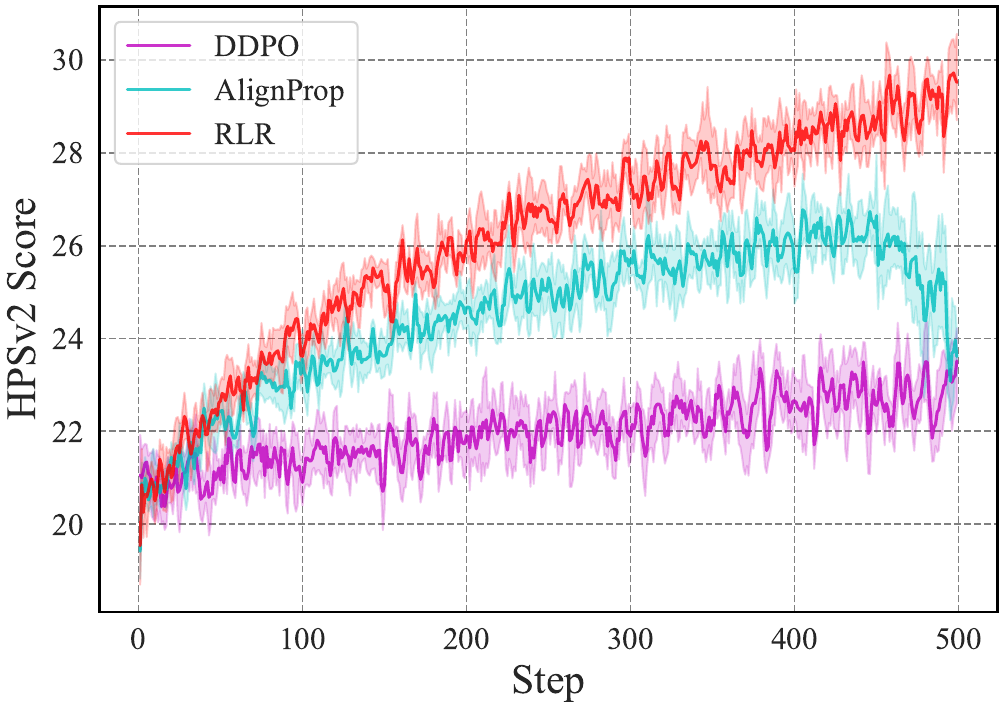}
        \caption{Reward curves for HPS v2 score.}
        \label{fig:subfig2}
    \end{subfigure}
    \vspace{-0.3cm}
    \caption{Sample efficiency analysis, reward curves under the same training steps for SD 1.4.}
    \label{fig:Sample_efficiency}
    \vspace{-0.3cm}
\end{figure}

\subsection{Text2Image Generation}

We evaluate our methods on two DMs: Stable Diffusion 1.4 and 2.1 \citep{LDM}. As shown in Table \ref{tab:image}, the RLR methods achieve higher reward scores on the unseen prompts from the test set. The RL-based method have limited improvement with respect to the base model, due to the sample inefficiency nature. Alignprop has considerable improvement over the base model. However, the biased estimator limits its further improvement. Training details and hyperparameters can be found in the appendix.

\paragraph{Sample efficiency analysis.} The compare the sample efficiency and the variance of different methods, we show the reward curves of SD 1.4 when training on the AES and HPS v2 models in Figure \ref{fig:Sample_efficiency}. The DDPO optimizes the reward at a very slow pace, indicating high variance and low sample efficiency. In the earlier phase, the AlignProp has comparable performance as the RLR. In the later phase, while the RLR can continue to improve the reward, the AlignProp suffers from severe model collapse.

\subsection{Text2Video Generation}
We compare our RLR not only with RL and truncated BP but also with a series of open-source or API-based Text2Video models. In the metric of DD and AQ, the RLR surpasses other methods by a large margin. In other metrics, RLR achieves considerable improvement over the base model, VideoCrafter. Some API-based models have better performance on some metrics, but the gaps are small. In terms of the weighted average score, our RLR has the best performance over all baselines. 

\subsection{Diffusive Chain-of-Thought}

\begin{figure*}[h]
    \vspace{-0.2cm}
    \centering
    \includegraphics[width=0.8\textwidth]{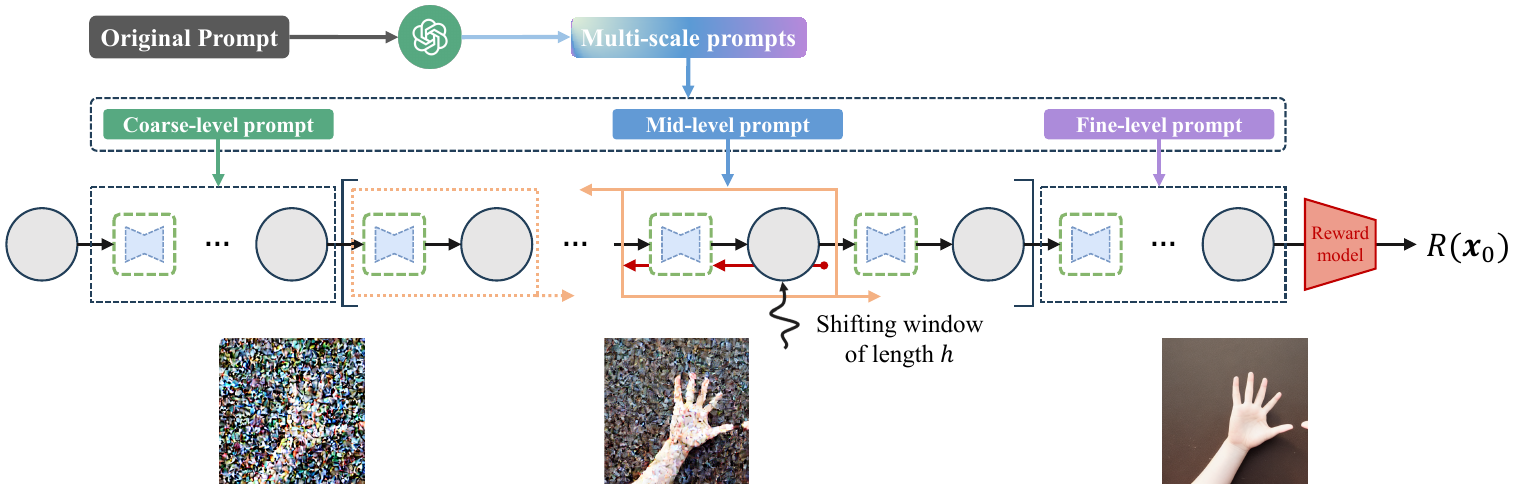}
    \vspace{-0.1cm}
    \caption{The framework of Diffusive Chain-of-Thought. The DM generates images in a multi-scale manner: earlier steps for low-resolution features and later steps for high-resolution features. If a specific scale has deficiencies, we utilize the HO estimator to enhance the corresponding steps.}
    \label{fig:dcot}
    \vspace{-0.3cm}
\end{figure*}

Furthermore, we propose the Diffusive Chain-of-Thought (DCoT), a prompt technique that is natural for our RLR optimizer to demonstrate the applicability of our method. The core idea is that \textbf{DMs generate content in a multi-scale (coarse-to-fine) manner}, and deficiencies at a particular scale can be addressed by \textbf{focusing gradient updates on the corresponding steps} of the diffusion process by the HO sub-chain. We propose dividing all the diffusion process steps into three groups: coarse-level, mid-level, and fine-level. The coarse-level chain includes steps adjacent to the initial noise, focusing on generating a rough outline. The fine-level chain includes steps adjacent to the final output, focusing on the fine-grained details. The mid-level chain in between focuses on the geometric structure of the content. 

\begin{wraptable}{r}{0.5\linewidth}
    \vspace{-0.5cm}
    \caption{Experiment results for Diffusive Chain-of-Thought}
    \vspace{-0.2cm}
    \setlength{\tabcolsep}{4pt}
    \small
    \centering
    \begin{adjustbox}{max width=\linewidth}
    \begin{tabular}{c|cccc}
    \toprule
    \textbf{Methods} & PickScore & HPSv2 & AES & ImageReward \\
    \midrule
    SD1.4 & 15.33 & 20.50 & 4.76 & 33.45 \\
    SD1.4-DCoT & 16.38 & 21.68 & 4.83 & 39.24 \\
    RLR w/o & 18.05 & 23.58 & 5.27 & 43.09 \\
    \cellcolor{mygray} \textbf{RLR-DCoT} & \cellcolor{mygray} \textbf{19.45} & \cellcolor{mygray} \textbf{25.80} & \cellcolor{mygray} \textbf{5.83} & \cellcolor{mygray} \textbf{49.88} \\
    \bottomrule
    \end{tabular}\end{adjustbox}
    \label{tab:DCoT}
    \vspace{-0.4cm}
\end{wraptable}
The idea of DCoT is shown in Figure \ref{fig:dcot}, which converts the original prompt into multi-scale prompts reflecting the coarse-to-fine nature. Different generation steps should be conditioned on different prompts instead of being conditioned on the same prompt. The HO estimator term in the RLR enables a $h$-step local computational chain for low-variance, unbiased gradient estimation. By integrating DCoT, we can target the HO sub-chain precisely at the time steps (i.e., scales) where generation is deficient, as revealed by the multi-scale prompt decomposition. The HO estimator term uniformly picks a starting point $j \sim \mathcal{U}(1,T-h)$ from the entire $T$-step chain to start the local $h$-step BP chain. When applying the DCoT to the fine-tuning process, we should constrain the sample range of $j$ in the steps where deficiencies exist, $j \sim \mathcal{U}(a,b), 1<a<b<T-h, b-a>h$. In our experiment for the hand task, we set $a=30$ and $b=40$. 

We write 5 prompts for hand generation and then prompt ChatGPT to generate the multi-scale prompts for the three levels. We report the performance in Table \ref{tab:DCoT}. As shown in the table, either simply applying DCoT to the base model or combining it with the RLR can improve the performance significantly. The RLR with the DCoT has the best performance. Qualitative results are shown in the Appendix \ref{detail_DCoT}.

\subsection{Ablation Study}

\begin{wraptable}{r}{0.6\linewidth}
    \vspace{-0.45cm}
    \caption{Ablation of the RLR.}
    \vspace{-0.2cm}
    \setlength{\tabcolsep}{4pt}
    \small
    \centering
    \begin{adjustbox}{max width=\linewidth}
    \begin{tabular}{c|cccc}
    \toprule
    \textbf{Methods} & PickScore & HPSv2 & AES & ImageReward \\
    \midrule
    RLR w/o HO \& ZO & 18.43 & 23.66 & 5.78 & 60.07 \\
    RLR w/o ZO & 20.11 & 27.07 & 6.23 & 68.35 \\
    RLR w/o HO & 19.28 & 26.70 & 5.92 & 63.85 \\
    RLR & 21.38 & 29.22 & 6.65 & 76.55 \\
    \bottomrule
    \end{tabular}\end{adjustbox}
    \label{tab:ablation}
    \vspace{-0.3cm}
\end{wraptable}
We conduct the ablation study, using SD 1.4 and HPD v2, to verify the contribution of different parts in the RLR optimizer. In Table \ref{tab:ablation}, we evaluate the RLR and its variants (V1: the RLR without HO and ZO; V2: the RLR without ZO; V3: the RLR without HO). The V1 performs the worst since it actually reduces to the truncated BP with only one time-step. The V2 and V3 perform better than the V1. It is worth noting that the V2 is better than the V3. 
The V3 without HO is actually an unbiased estimator since it takes all time steps into account when estimating the gradient. Even though the V2 rearranges the computational graph by the HO, it is still a biased estimator. This phenomenon indicates the importance of unbiasedness when conducting the fine-tuning task.

\section{Theoretical Properties of Gradient Estimators: Bias, Variance, and Convergence}\label{sec:theory}
In this part, we analyze the bias of truncated BP and compare the variance of different estimators, backing the claim in Table \ref{tab:var_mem_prop}. Thanks to the unbiasedness of the RLR estimator, the convergence of the optimization is also guaranteed. 

To alleviate the memory burden of full-step BP, the truncated variant is often employed, backpropagating the gradient with only $T'$ steps; $T' \ll T$. However, the truncation introduces a \textbf{structural bias} into the gradient estimator. We have the following proposition to justify this structural bias.
\begin{proposition}[Biasedness of Truncated-BP ]\label{prop1}
Assume $R$ and $\phi$ are differentiable almost everywhere, $R$ and $\phi_t$ have bounded gradients, then the FO estimator is unbiased. However, the truncated BP estimator $\nabla_{\theta}R(\bm{x}_0)_{\mathrm{truncated}}$ has a structural bias, which can be specified as below:
\begin{equation*}
\begin{aligned}
&\nabla_{\theta}\mathbb{E}[R(x_0)] - \mathbb{E}[\nabla_{\theta}R(x_0)_{\mathrm{truncated}}]= \mathbb{E}_{z_{1:T}} \bigg[\bigg(
\sum_{i=T'+1}^{T}\frac{\partial \phi_i(x_i, z_i;\theta)}{\partial \theta}\prod_{j=1}^{i-1}\frac{\partial x_{j-1}}{\partial x_j}\bigg)^{\top}\frac{dR(x_0)}{dx_0}\bigg].
\end{aligned}
\end{equation*}
\end{proposition}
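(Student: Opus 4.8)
The plan is to establish the two claims separately: first the unbiasedness of the full first-order (FO) estimator, then the explicit bias formula for its truncated variant. For the first claim, the key observation is that the reward $R(x_0)$ is a deterministic, differentiable function of $\theta$ once the noise $z_{1:T}$ is fixed (since $x_0 = \phi_{1:T}(x_T, z_{1:T}; \theta)$), and the noise distribution $f(z_{1:T})$ does not depend on $\theta$. Under the stated assumptions—$R$ and each $\phi_t$ differentiable almost everywhere with bounded gradients—the integrand $\nabla_\theta R(x_0)$ is dominated by an integrable function, so the dominated convergence theorem (or the standard Leibniz rule for differentiation under the integral sign) justifies swapping the gradient and the expectation: $\nabla_\theta \mathbb{E}_{z_{1:T}}[R(x_0)] = \mathbb{E}_{z_{1:T}}[\nabla_\theta R(x_0)]$. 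This shows the FO estimator, which computes $\nabla_\theta R(x_0)$ by exact backpropagation through the full chain, is unbiased.

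Next I would write out the exact gradient $\nabla_\theta R(x_0)$ via the chain rule over the recursive structure in equation~(\ref{eq:recursive_structure}). Because $\theta$ enters at every step $\phi_i(x_i, z_i; \theta)$, the total derivative decomposes as a sum over time steps $i$, where the $i$-th term carries the explicit Jacobian $\partial \phi_i / \partial \theta$ and then propagates forward through the product of transition Jacobians $\prod_{j=1}^{i-1} \partial x_{j-1}/\partial x_j$ down to $x_0$, finally contracted against $dR(x_0)/dx_0$. The full (unbiased) gradient is therefore the sum of such terms for $i = 1, \dots, T$. The truncated estimator, by construction, terminates backpropagation after $T'$ steps and thus retains only the terms $i = 1, \dots, T'$, dropping exactly the tail $i = T'+1, \dots, T$.

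The bias is then obtained by subtracting: $\nabla_\theta \mathbb{E}[R(x_0)] - \mathbb{E}[\nabla_\theta R(x_0)_{\mathrm{truncated}}]$ equals the expectation of the omitted tail terms, which is precisely the stated formula
\[
\mathbb{E}_{z_{1:T}}\biggl[\Bigl(\sum_{i=T'+1}^{T}\frac{\partial \phi_i(x_i,z_i;\theta)}{\partial \theta}\prod_{j=1}^{i-1}\frac{\partial x_{j-1}}{\partial x_j}\Bigr)^{\!\top}\frac{dR(x_0)}{dx_0}\biggr].
\]
The bounded-gradient assumptions guarantee each dropped term has finite expectation, so the subtraction is valid and the bias does not vanish in general (it is zero only in the degenerate case where the tail Jacobians annihilate, e.g.\ zero reward sensitivity).

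The main obstacle I anticipate is the justification for interchanging differentiation and expectation in the first step, since $R$ and $\phi_t$ are only assumed differentiable \emph{almost everywhere} rather than everywhere. I would handle this by invoking the bounded-gradient hypothesis to produce an integrable dominating function, noting that the measure-zero set of non-differentiability does not affect the integral (the perturbations $z_t$ have absolutely continuous densities, so the non-differentiable set has probability zero). A secondary subtlety is bookkeeping the chain-rule indices correctly—ensuring the transition-Jacobian product $\prod_{j=1}^{i-1}\partial x_{j-1}/\partial x_j$ telescopes the sensitivity from step $i$ down to $x_0$—but this is routine once the recursive unrolling in~(\ref{eq:recursive_structure}) is made explicit.
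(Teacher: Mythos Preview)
Your proposal is correct and follows essentially the same approach as the paper: justify the interchange of gradient and expectation via dominated convergence under the bounded-gradient hypothesis, unroll the chain rule over the recursion to express the full gradient as a sum over $i=1,\dots,T$ of the terms $\bigl(\frac{\partial \phi_i}{\partial \theta}\prod_{j=1}^{i-1}\frac{\partial x_{j-1}}{\partial x_j}\bigr)^{\!\top}\frac{dR}{dx_0}$, identify the truncated estimator as the partial sum up to $T'$, and subtract to obtain the stated tail. The paper's proof is slightly more explicit in writing out the recursive unrolling step by step, but the structure and all key ingredients are identical to what you describe.
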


Bias in the estimator can lead to suboptimal updates or even training failure, as the truncated gradient may not follow a true descent direction. This can cause two major issues: \textbf{model collapse} and \textbf{loss of multi-scale information}. In contrast, ZO \citep{spall1992multivariate} and HO \citep{oneforward} are unbiased. 

The stochastic nature of the DM results in the variance of the estimator. As expected, the variance of the FO estimator is lower than that of the HO and ZO estimators because the differentiation leverages the structural information of the neural network. However, BP introduces significant computational and storage overhead. The following proposition demonstrates that this additional cost is, to some extent, justified, as BP leverages accurate internal structures to reduce estimation variance.
\begin{proposition}[Variance Comparison]\label{prop2}
Under Assumptions (A.1-3) in the Appendix, the variance of FO estimators is less than or equal to ZO estimators, i.e.
\begin{equation}
    \mathrm{Var}(\nabla_\theta R(x_0)) \le \mathrm{Var}(R(x_0) \nabla\ln f(z)).
\end{equation}
\end{proposition}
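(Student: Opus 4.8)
The plan is to reduce the vector-valued variance comparison (interpreting $\mathrm{Var}$ as total variance, i.e. the trace of the covariance) to a single covariance computation via Gaussian integration by parts. Write $d:=\dim\theta$ and let $A:=\nabla_\theta R(x_0)$ be the FO estimator and $B:=R(x_0)\,z/\sigma^2$ be the ZO estimator, where $z\sim\mathcal{N}(\bm{0},\sigma^2 I)$; note that $\mathrm{Var}\big(R(x_0)\nabla\ln f(z)\big)=\mathrm{Var}(B)$ since $\nabla\ln f(z)=-z/\sigma^2$ and total variance is invariant under negation. Proposition~\ref{prop1} already supplies that $A$ is unbiased, and the likelihood-ratio (score-function) identity gives that $B$ is unbiased for the same target, so both share the common mean $\mu:=\nabla_\theta\mathbb{E}[R(x_0)]$. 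Expressing each variance as ``second moment minus $\|\mu\|^2$'', the $\|\mu\|^2$ terms cancel, so the claim is equivalent to the second-moment inequality $\mathbb{E}\|A\|^2\le\mathbb{E}\|B\|^2$.

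The core step is the exact decomposition
\[
\mathrm{Var}(B)=\mathrm{Var}\big(A+(B-A)\big)=\mathrm{Var}(A)+\mathrm{Var}(B-A)+2\,\mathrm{Cov}(A,\,B-A).
\]
Here $\mathrm{Var}(B-A)\ge 0$ is automatic, so everything hinges on the covariance term; and since $\mathbb{E}[B-A]=\mu-\mu=0$, this term equals $\mathbb{E}\langle A,B\rangle-\mathbb{E}\|A\|^2$. I would evaluate $\mathbb{E}\langle A,B\rangle=\sigma^{-2}\mathbb{E}\big[R\,\langle\nabla R,z\rangle\big]$ by Stein's lemma applied coordinate-wise, $\mathbb{E}[z_i\,g]=\sigma^2\,\mathbb{E}[\partial_i g]$ with $g=R\,\partial_i R$, which (summing over $i$) yields $\mathbb{E}\langle A,B\rangle=\mathbb{E}\|\nabla R\|^2+\mathbb{E}[R\,\Delta R]$. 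Subtracting $\mathbb{E}\|A\|^2=\mathbb{E}\|\nabla R\|^2$ collapses the covariance to the clean identity $\mathrm{Cov}(A,B-A)=\mathbb{E}[R\,\Delta R]$, and the same Stein computation gives $\mathrm{Var}(B-A)=\mathbb{E}\|\nabla R\|^2+\tfrac{d}{\sigma^2}\mathbb{E}[R^2]$, so that
\[
\mathrm{Var}(B)-\mathrm{Var}(A)=\mathbb{E}\|\nabla R\|^2+\tfrac{d}{\sigma^2}\mathbb{E}[R^2]+2\,\mathbb{E}[R\,\Delta R].
\]

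To finish I would invoke Assumptions (A.1--3): the bounded-gradient hypotheses (inherited from Proposition~\ref{prop1}) keep $\mathbb{E}\|\nabla R\|^2$ and the curvature term $\mathbb{E}[R\,\Delta R]$ finite and uniformly bounded, while the positive contribution $\tfrac{d}{\sigma^2}\mathbb{E}[R^2]$ grows as the perturbation scale $\sigma$ shrinks. In the operating regime of the paper (the budget condition $\mathcal{B}_z T<\mathcal{B}$, which corresponds to a sufficiently small $\sigma$), this term dominates $\big|2\,\mathbb{E}[R\,\Delta R]\big|$, forcing the right-hand side to be nonnegative and giving $\mathrm{Var}(A)\le\mathrm{Var}(B)$. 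The main obstacle is precisely the curvature term $\mathbb{E}[R\,\Delta R]$: it is \emph{not} sign-definite (a suitable Hermite-coefficient choice for $R$ can make it negative, and at large $\sigma$ it can even reverse the inequality), so the crux is to show that Assumptions (A.1--3) pin $\sigma$ small enough—or impose a curvature/convexity-type condition on $R$—for $\mathrm{Var}(B-A)$ to absorb it. This decomposition also conceptually explains \emph{why} FO beats ZO: the pathwise estimator wins exactly when its structural signal is not swamped by the $O(d/\sigma^2)$ score-function noise, matching the qualitative ranking asserted in Table~\ref{tab:var_mem_prop}.
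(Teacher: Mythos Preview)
The paper's own proof is a one-line citation: it verifies that Gaussian noise meets the regularity hypotheses and then invokes Theorem~2 of \cite{cui2020variance}. Your Stein-lemma route is therefore genuinely different, and the computation is correct---the identity
\[
\mathrm{Var}(B)-\mathrm{Var}(A)=\mathbb{E}\|\nabla R\|^2+\tfrac{d}{\sigma^2}\,\mathbb{E}[R^2]+2\,\mathbb{E}[R\,\Delta R]
\]
is a clean decomposition the paper never writes down, and it does explain the qualitative ranking in Table~\ref{tab:var_mem_prop} via the $d/\sigma^2$ term.

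The gap is in closing the sign. Assumptions~\ref{ass1}--\ref{ass3} are purely regularity and integrability conditions (Lipschitz moduli, dominated-convergence envelopes, $C^2$ smoothness); none of them constrains the perturbation scale $\sigma$, and the budget inequality $\mathcal{B}_z T<\mathcal{B}$ you invoke concerns \emph{memory cost}, not noise magnitude, so it cannot be converted into a smallness condition on $\sigma$. Your own caveat is in fact decisive: take $d=1$ and $R(x)=\cos(kx)$ with $k\sigma\gg 1$. This function satisfies A.1--3 (bounded, $C^\infty$, Lipschitz constant $k$, no atoms), yet $\mathrm{Var}(A)\approx k^2/2$ while $\mathrm{Var}(B)\approx 1/(2\sigma^2)$, reversing the inequality. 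Hence the Stein decomposition alone cannot deliver the proposition under the stated hypotheses; whatever structural condition makes the cited external theorem go through is doing real work that your argument has not reproduced, and a self-contained proof would need to isolate and verify that condition explicitly rather than appeal to the budget regime.
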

Based on the above proposition, it is straightforward to conclude that the variance of the HO estimator is also less than or equal to that of the ZO estimator, as it is essentially an FO estimator with a perturbation at the start of the sub-chain. The Table \ref{tab:var_mem_prop} presents all the gradient computation methods discussed above.

Overall, the RLR reorganizes the recursive computation chain by perturbation-based estimation, seamlessly integrating ZO, HO, and FO optimization techniques. RLR strikes a balance between computational cost and gradient accuracy, achieving both unbiasedness and low variance. The following Theorem \ref{prop3} establishes its unbiasedness.

\begin{theorem}[Unbiasedness of RLR]\label{prop3}
    The RLR estimator is an unbiased estimator:
    \begin{equation}
    \begin{aligned}
        \nabla_\theta &\mathbb{E}[R(\phi_{1:T}(\bm{x}_T;\theta))]= 
        \mathbb{E}_{z_{1:T},j\sim\mathcal{U}(1,T-h)} \bigg[{D^{\top}_{\theta}\phi_1(x_1,z_1;\theta)}\frac{dR(x_0)}{dx_0}\\
   &-R(x_0){D^{\top}_\theta\phi_{j:j+h}(x_{j+h},z_{j:j+h};\theta)}\nabla_z\ln f(z_j)-\sum_{i \in C}R(x_0)\nabla_z\ln f(z_i).\bigg].
    \end{aligned}
\end{equation}
\end{theorem}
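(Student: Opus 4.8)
The plan is to reduce the statement to the exact (full-backpropagation) chain-rule decomposition of the gradient and then verify that the three building blocks of the RLR estimator are unbiased for three disjoint pieces of it. Throughout, the objective is $\mathbb{E}_{z_{1:T}}[R(x_0)]$ with the per-step noise interpretation fixed by the construction: additive inherent noise on the FO step and on the length-$h$ block, and parameter perturbation $\varphi(x_i;\theta+z_i)$ on the indices $i\in C$. Writing $A_t:=\big(\tfrac{\partial\phi_t}{\partial\theta}\big)^{\top}\big(\tfrac{\partial x_0}{\partial x_{t-1}}\big)^{\top}\tfrac{dR}{dx_0}$ for the contribution of the parameter occurrence inside step $t$, where $\tfrac{\partial\phi_t}{\partial\theta}$ is the direct partial (holding $x_t$ fixed) and $\tfrac{\partial x_0}{\partial x_{t-1}}=\prod_{s=1}^{t-1}\tfrac{\partial x_{s-1}}{\partial x_s}$ is the deterministic path Jacobian, the differentiability and bounded-gradient hypotheses justify interchanging $\nabla_\theta$ with $\mathbb{E}_{z_{1:T}}$ by dominated convergence, giving $\nabla_\theta\mathbb{E}[R(x_0)]=\sum_{t=1}^{T}\mathbb{E}[A_t]$. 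The remainder of the proof shows the FO, HO, and ZO terms reproduce $\mathbb{E}[A_1]$, $\sum_{s=j}^{j+h}\mathbb{E}[A_s]$, and $\sum_{i\in C}\mathbb{E}[A_i]$ respectively.

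The two outer blocks are direct. Since $x_0=\phi_1(x_1,z_1;\theta)$ makes $\tfrac{\partial x_0}{\partial x_0}=I$, the FO term $D^{\top}_{\theta}\phi_1\,\tfrac{dR}{dx_0}$ equals $A_1$ pointwise, so it is unbiased while retaining the exact reward gradient. For each ZO index $i\in C$ I would invoke the location-family score identity: the effective parameter $\theta_i:=\theta+z_i$ has density $f(\cdot-\theta)$, so $\nabla_\theta\ln f(\theta_i-\theta)=-\nabla_z\ln f(z_i)$, and the likelihood-ratio identity $\nabla_\theta\mathbb{E}_{z_i}[g]=\mathbb{E}_{z_i}[g\,\nabla_\theta\ln f]$ yields $\mathbb{E}[A_i]=-\mathbb{E}[R(x_0)\nabla_z\ln f(z_i)]$; summing over $i\in C$ gives the ZO block, the interchange again being licensed by the bounded-gradient assumption.

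The crux, and where I expect the real work, is the HO block: I must show that a single score evaluation at the shallow end of the sub-chain, composed with exact backpropagation through the block, simultaneously recovers the contributions of all $h+1$ steps $j,\dots,j+h$. I would condition on the block input $x_{j+h}$ and the internal noises $z_{j+1:j+h}$, introduce the post-noise mean $m_j:=\varphi(x_j;\theta)$ so that $x_{j-1}=m_j+z_j$, and set $\Psi(m_j):=\mathbb{E}[R(x_0)\mid m_j]$. Because the chain is Markov, $x_j$ influences the reward only through $m_j$; hence the location-family LR identity gives $\nabla_{m_j}\Psi(m_j)=-\mathbb{E}[R(x_0)\nabla_z\ln f(z_j)\mid m_j]$, which is precisely the reward sensitivity one would obtain by backpropagating $\tfrac{dR}{dx_0}$ through steps $j-1,\dots,1$. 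Composing this with the total derivative $D_\theta\phi_{j:j+h}=\tfrac{dm_j}{d\theta}=\sum_{s=j}^{j+h}\tfrac{\partial m_j}{\partial x_{s-1}}\tfrac{\partial\phi_s}{\partial\theta}$ and applying the tower property gives $\mathbb{E}\big[\big(\tfrac{dm_j}{d\theta}\big)^{\top}(-R(x_0)\nabla_z\ln f(z_j))\big]=\mathbb{E}\big[\big(\tfrac{dm_j}{d\theta}\big)^{\top}\nabla_{m_j}\Psi\big]=\sum_{s=j}^{j+h}\mathbb{E}[A_s]$, the last equality following by re-expanding $\tfrac{dm_j}{d\theta}$ and using $\tfrac{\partial x_0}{\partial x_{s-1}}=\tfrac{\partial x_0}{\partial x_{j-1}}\tfrac{\partial m_j}{\partial x_{s-1}}$. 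The delicate point is exactly this telescoping: the Markov property lets one likelihood-ratio factor absorb every parameter occurrence inside the block, and justifying it (together with the conditional differentiation-under-the-integral step) is the main obstacle.

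Finally I would assemble the pieces. The construction arranges $\{1\}$, $\{j,\dots,j+h\}$, and the ZO indices so that they partition $\{1,\dots,T\}$; adding the three unbiased blocks then gives $\mathbb{E}[G\mid j]=\sum_{t=1}^{T}\mathbb{E}[A_t]=\nabla_\theta\mathbb{E}[R(x_0)]$ for every fixed $j$. Since the conditional expectation equals the full gradient independently of $j$, taking the outer expectation over $j\sim\mathcal{U}(1,T-h)$ changes nothing, which is the stated identity. Thus the randomization over $j$ costs nothing for unbiasedness---it only governs variance and multi-scale coverage---so no argument beyond linearity of expectation is needed at this last step.
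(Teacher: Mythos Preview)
Your proposal is correct and follows essentially the same strategy as the paper: decompose the full chain-rule gradient $\nabla_\theta\mathbb{E}[R(x_0)]=\sum_t\mathbb{E}[A_t]$ and show that the FO, HO, and ZO pieces are unbiased for the corresponding disjoint summands via the location-family likelihood-ratio (score) identity. The paper's proof differs only in presentation: it works out a reduced three-step instance (one FO, one ZO, one HO step with $h=1$), performs the LR step by an explicit change of variables that pushes the parameter into the noise density, and then asserts that ``it is easy to generalize the result for any number of terms along the lines of the above proof.'' Your treatment of the HO block is therefore more complete than the paper's---you actually carry out the argument for general $h$ by introducing $m_j$, using the tower property, and telescoping $\tfrac{dm_j}{d\theta}$ to recover all $h{+}1$ contributions $A_j,\dots,A_{j+h}$ from a single score factor---whereas the paper only exhibits the $h=1$ case explicitly.
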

The variance of the RLR estimator, denoted by $\sigma^2_{\mathrm{RLR}}$, is discussed in the appendix. Under limited computational resources where full BP is infeasible, RLR achieves substantially lower variance compared to other unbiased gradient estimators. Finally, the convergence rate of RLR is provided in the following Theorem \ref{theo_convergence}.

\begin{theorem}[Convergence Rate]\label{theo_convergence}
    Suppose that the reward function $R(\cdot)$ is L-smooth. By appropriately selecting the step size, the convergence rate of the RLR is given by
    $$\frac{1}{K+1}\sum_{k=0}^K\mathbb{E}(\|\nabla R(\theta_k)\|^2)\leq \sqrt{\frac{8L\Delta_0\sigma^2_{\mathrm{RLR}}}{K+1}}+\frac{2L\Delta_0}{K+1},$$
where $K$ is the number of iterations, $\theta_k$ is the trainable parameter in the $k$-th iteration, and
$\Delta_0=|R(\theta_0)-R^\ast|$ is difference between initialization performance and optimal performance.
\end{theorem}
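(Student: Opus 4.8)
The plan is to treat this as a textbook stochastic gradient-ascent analysis for an $L$-smooth nonconvex objective, with the two structural ingredients supplied by the preceding results: the unbiasedness established in Theorem~\ref{prop3}, and the variance bound $\mathrm{Var}(G)\le\sigma^2_{\mathrm{RLR}}$. Writing $R(\theta):=\mathbb{E}[R(\phi_{1:T}(x_T;\theta))]$ for the expected reward (abusing notation, since the objective is a maximization) and $G_k$ for the RLR estimator at iteration $k$, the update is the ascent step $\theta_{k+1}=\theta_k+\alpha G_k$. First I would invoke $L$-smoothness in its descent-lemma form, which for an ascent step reads
\begin{equation*}
R(\theta_{k+1})\ge R(\theta_k)+\alpha\langle\nabla R(\theta_k),G_k\rangle-\tfrac{L\alpha^2}{2}\|G_k\|^2.
\end{equation*}

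Next I would take the conditional expectation given $\theta_k$. By Theorem~\ref{prop3} we have $\mathbb{E}[G_k\mid\theta_k]=\nabla R(\theta_k)$, so the inner-product term becomes $\|\nabla R(\theta_k)\|^2$, while the second moment splits as $\mathbb{E}[\|G_k\|^2\mid\theta_k]=\|\nabla R(\theta_k)\|^2+\mathrm{Var}(G_k)\le\|\nabla R(\theta_k)\|^2+\sigma^2_{\mathrm{RLR}}$. Collecting terms and restricting the step size to $\alpha\le 1/L$, so that the coefficient $1-\tfrac{L\alpha}{2}\ge\tfrac12$, yields the one-step progress bound
\begin{equation*}
\mathbb{E}[R(\theta_{k+1})\mid\theta_k]\ge R(\theta_k)+\tfrac{\alpha}{2}\|\nabla R(\theta_k)\|^2-\tfrac{L\alpha^2}{2}\sigma^2_{\mathrm{RLR}}.
\end{equation*}

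I would then take full expectations, rearrange to isolate $\tfrac{\alpha}{2}\,\mathbb{E}\|\nabla R(\theta_k)\|^2$, and telescope over $k=0,\dots,K$. The telescoped reward differences collapse to $\mathbb{E}[R(\theta_{K+1})]-R(\theta_0)\le R^\ast-R(\theta_0)=\Delta_0$, giving
\begin{equation*}
\frac{1}{K+1}\sum_{k=0}^{K}\mathbb{E}\|\nabla R(\theta_k)\|^2\le\frac{2\Delta_0}{\alpha(K+1)}+L\alpha\,\sigma^2_{\mathrm{RLR}}.
\end{equation*}
The last step is the step-size optimization that fixes the stated constants: I would set $\alpha=\min\{1/L,\ \sqrt{2\Delta_0/((K+1)L\sigma^2_{\mathrm{RLR}})}\}$. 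Using $\alpha\le\sqrt{2\Delta_0/((K+1)L\sigma^2_{\mathrm{RLR}})}$ in the second term and $1/\alpha=\max\{L,1/b\}\le L+\sqrt{(K+1)L\sigma^2_{\mathrm{RLR}}/(2\Delta_0)}$ in the first, the right-hand side collapses to exactly $\sqrt{8L\Delta_0\sigma^2_{\mathrm{RLR}}/(K+1)}+2L\Delta_0/(K+1)$.

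I expect the genuinely delicate point to be getting these constants exactly right in the step-size optimization rather than any conceptual difficulty: the truncation of $\alpha$ at $1/L$ is precisely what produces the additive $2L\Delta_0/(K+1)$ term, and one must bound $\max\{L,1/b\}\le L+1/b$ rather than plug in the unconstrained minimizer, or the clean two-term form will not emerge. A secondary check is that the variance bound $\sigma^2_{\mathrm{RLR}}$ must hold \emph{uniformly} over the iterates so that it can be pulled outside the per-step inequality before summing, and that the ascent sign convention is applied consistently, since the objective maximizes the reward.
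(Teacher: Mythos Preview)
Your proposal is correct and follows essentially the same route as the paper: descent lemma from $L$-smoothness, conditional expectation using unbiasedness (Theorem~\ref{prop3}) and the variance bound $\sigma^2_{\mathrm{RLR}}$, telescoping, then step-size optimization. The only cosmetic difference is the form of the step size---the paper takes $\gamma=\big[(2\Delta_0/((K{+}1)L\sigma^2_{\mathrm{RLR}}))^{-1/2}+L\big]^{-1}$ so that $1/\gamma=L+1/b$ holds with equality rather than via your $\max\le$ sum bound---but the resulting constants are identical.
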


\section{Conclusion}
We propose the RLR optimizer, a half-order gradient estimation framework designed for efficient fine-tuning of diffusion models. Theoretically, we analyze the bias, variance, and convergence of the RLR estimator and formulate a constrained optimization problem to guide its design. Empirically, RLR consistently outperforms both reinforcement learning and truncated backpropagation methods on Text2Image and Text2Video tasks across multiple human preference reward models and benchmarks. Furthermore, we introduce a novel prompt technique, Diffusive Chain-of-Thought (DCoT), which complements the RLR and further boosts performance. Although determining the appropriate sub-chain length $h$ can be nontrivial in practice, we provide both theoretical justification and empirical ablations to guide practitioners in making informed choices.

\newpage
\bibliography{iclr2026_conference}
\bibliographystyle{iclr2026_conference}

\newpage
\appendix

\addcontentsline{toc}{section}{Appendix} 
\startcontents[appendix]
\printcontents[appendix]{l}{1}{\section*{Appendix Contents}\vspace{-0.2cm}\noindent\rule{\linewidth}{0.4pt}\vspace{-0.2cm}}
\noindent\rule{\linewidth}{0.4pt}

\newpage

\section{Extended Related Works}\label{ex_related_work}
\paragraph{Diffusion Probabilistic Models.} Denoising Diffusion Model \citep{ddpm, lu2022dpmsolver} is one of the strongest models for generation tasks, especially for visual generation \citep{LDM, DiT, videocrafter2}. Extensive research has been conducted from theoretical and empirical perspectives \citep{ddim, EDM, score_sde}. It has achieved phenomenal success in multi-modality generation, including image, video, audio, and 3D shapes. The DM is trained on enormous images and videos from the internet \citep{frozen_in_time, wang2023internvid_viclip, schuhmann2022laion}. Empowered by modern architecture \citep{transformer}, it has a powerful learning capability for Pixel Space Distribution. 

\paragraph{Alignment and Post-training.} After pre-training to learn the distribution of the targeted modality \citep{achiam2023gpt4, scaling_neural_LM}, post-training is conducted to align the model toward specific preferences or tune the model to optimize a particular objective. RL has been utilized to align the foundation models toward various objectives \citep{RLHF, RLHF_blog, ddpo}. DPOK \citep{RL_DPOK} studies KL regularization when training a separate DM for each prompt. Supervised learning can also be applied to the post-training phase \citep{DPO}, either optimizing an equivalent objective \citep{diffusion_dpo} or directly differentiating the reward model \citep{draft, alignprop, vader}. D3PO \cite{yang2024D3PO} utilizes the DPO loss to train the DM. Specialist Diffusion \cite{lu2023specialist} focuses on sample-efficient, few-shot fine-tuning of large pre-trained diffusion models to enable the generation of new visual styles from as few as 5–10 examples. SPIN \citep{yuan2024spin} introduces a self-play learning paradigm for diffusion models. For DM, most methods use a neural reward model to align the pre-trained model, and there has been a continual effort to design better reward models \citep{he2024videoscore, xu2024visionreward, imagereward}. 

\paragraph{Forward Learning Methods.} 
\textcolor{black}{After extensive exploration of model training using forward inferences only \citep[see, e.g.,][]{peng2022new, hinton2022forward}, forward-learning methods \citep{salimans2017evolution, mezo, deepzero, oneforward} based on stochastic gradient estimation have recently emerged as a promising alternative to classical BP for large-scale machine learning problems \citep{HiZOO, revisi_ZO}. Subsequent research \citep{ren2024flops, chen2024enhancing} has further optimized computational and storage overhead from various perspectives, achieving greater efficiency.}

\section{Experiments settings}\label{setting}
\subsection{Overall Setting} 
\paragraph{Prompts dataset.} We use Pick-a-Pic \citep{pickapic} and HPD v2 \citep{hpsv2} for the Text2Image experiments. We report the performance of the trained model on unseen prompts from the training phase.
For the Text2Video task, we prompt ChatGPT to generate a series of prompts that describe motions and train models under the prompts. After the training, we evaluate the model's performance on the unseen prompts from the VBench \citep{vbench}.

\paragraph{Reward model and benchmark.} We adopt multiple human preference reward models to train and test our methods, including PickScore \citep{pickapic}, HPS v2 \citep{hpsv2}, and ImageReward \citep{imagereward}. All the models are trained on large-scale preference datasets. We also included the traditional aesthetic models, e.g., AES \citep{aes}.

For the video generation task, we use the VBench \citep{vbench} to rate the methods according to various perspectives of the generated videos. We report 6 aspects of the generated videos in the main text: Subject Consistency (SC), Background Consistency (BC), Motion Smoothness (MS), Dynamic Degree (DD), Aesthetic Quality (AQ), and Imaging Quality (IQ). For more results of 16 metrics on VBench, please refer to Table \ref{tab:vbench_16} in the appendix.

\paragraph{Baselines.} We compared our methods with RL-based methods and BP-based methods. DDPO \citep{ddpo} is the state-of-the-art RL method for DMs. For the Text2Image experiment, we include the AlignProp \citep{alignprop}, a randomized truncated BP method. For the Text2Video experiment, we included VADER \citep{vader}, a BP-based method especially catering to video generation.

\subsection{Prompts}
\textbf{HPD v2.} Human Preference Dataset v2 (HPD v2) is a large-scale collection designed to evaluate human preferences for images generated from text prompts, comprising 798,090 human-annotated preference choices from 433,760 image pairs. It includes images from diverse text-to-image models and utilizes cleaned prompts, processed with ChatGPT to remove biases and style-related words. 

\textbf{Pick-a-Pic.} The Pick-a-Pic dataset is a publicly available collection of over half a million human preferences for images generated from 35,000 text prompts. Users generate images using state-of-the-art text-to-image models and select their preferred image from pairs, with each example including a prompt, two images, and a preference label. Collected via a web application, this dataset better reflects real-world user preferences and is used to train the PickScore scoring function, which enhances model evaluation and improvement.

\textbf{ChatGPT Created Prompts.} We ask ChatGPT to generate imaginative and detailed text descriptions for various scenarios, including people engaging in sports, animals wearing clothes, and animals playing musical instruments. We use the ChatGPT-generated prompts to train the model.

\textbf{Vbench Prompt Suite.} The Prompt Suite in VBench is a carefully curated set of text prompts designed to evaluate video generation models across 16 distinct evaluation dimensions. Each dimension is represented by approximately 100 prompts tailored to test specific aspects of video quality and consistency. The prompts are organized to reflect different categories, such as animals, architecture, human actions, and scenery, ensuring comprehensive coverage across diverse content types. These prompts are used to assess models’ abilities, such as subject consistency, object class generation, motion smoothness, and more, providing insights into the models' strengths and weaknesses across various scenarios.

\subsection{Reward Models and Evaluation Metrics}
\textbf{PickScore.} The PickScore Reward Model is a scoring function trained on the Pick-a-Pic dataset, which includes human preferences for text-to-image generated images \citep{pickapic}. It uses a CLIP-based architecture to compute scores by comparing text and image representations. Trained to predict user preferences, it minimizes KL-divergence between true preferences (preferred image or tie) and predicted scores.

\textbf{HPSv2.} Human Preference Score v2 (HPSv2) \citep{hpsv2} is a model designed to evaluate human preferences for images generated by text-to-image models. Trained on the Human Preference Dataset v2, which includes 798,000 human preference annotations on 433,760 image pairs from various generative models, HPSv2 predicts which images are preferred based on text-image alignment and aesthetic quality, offering a more human-aligned evaluation compared to traditional metrics like Inception Score or Fréchet Inception Distance. 

\textbf{AES.} The Aesthetic Score (AES) \citep{aes} is obtained from a model that builds on CLIP embeddings and incorporates additional multilayer perceptron (MLP) layers to capture the visual attractiveness of images. This metric serves as a tool for assessing the aesthetic quality of generated images, offering insights into how closely they match human aesthetic preferences.

\textbf{ImageReward.} ImageReward \citep{imagereward} is a reward model designed to evaluate human preferences in text-to-image generation. It is trained on a large dataset of 137k expert comparisons, using a systematic annotation pipeline that rates and ranks images based on alignment with text, fidelity, and harmlessness. Built on the BLIP model, ImageReward accurately predicts human preferences, outperforming models like CLIP, Aesthetic, and BLIP. It serves as a promising automatic evaluation metric for text-to-image synthesis, aligning well with human rankings.

\subsection{Baselines}
\textbf{DOODL.} DOODL (Direct Optimization of Diffusion Latents) optimizes diffusion latents to improve image generation by directly adjusting latents based on a model-based loss. Unlike traditional classifier guidance methods, DOODL avoids the need for retraining models or using approximations, providing more accurate and efficient guidance. It enhances text-conditioned generation, expands pre-trained model vocabularies, enables personalized image generation, and improves aesthetic quality, offering better control and higher-quality outputs in generative image models.

\textbf{DDPO.} DDPO (Denoising Diffusion Policy Optimization) is an RL-based method for optimizing diffusion models towards specific goals like image quality or compressibility. By treating denoising as a multi-step decision-making task, DDPO uses policy gradients to maximize a reward function, unlike traditional likelihood-based methods. DDPO also shows strong generalization across diverse prompts, making it highly effective for fine-tuning generative models.

\textbf{AlignProp.} AlignProp fine-tunes text-to-image diffusion models by backpropagating gradients through the entire denoising process using randomized truncated backpropagation. This method reduces memory and computational costs by employing low-rank adapter modules and gradient checkpointing. The randomized TBTT approach, which randomly selects the number of backpropagation steps, prevents overfitting and mode collapse, improving both sample efficiency and reward optimization. AlignProp outperforms other methods in terms of generalization, image-text alignment, and aesthetic quality, making it a highly efficient and effective tool for optimizing diffusion models to specific downstream objectives.

\textbf{VADER.} VADER (Video Diffusion via Reward Gradients) fine-tunes video diffusion models by backpropagating gradients from pre-trained reward models. It enhances sample and computational efficiency, using reward models to assess aesthetics, text-video alignment, and other video-specific tasks. VADER maintains temporal consistency and generalizes well to unseen prompts, making it an effective tool for adapting video models to complex objectives.

\subsection{Orthogonal tricks}

\textbf{LoRA} applies low-rank adaptation to the original parameters, $\theta$, by fine-tuning only the low-rank components rather than the full parameters. Specifically, each linear layer in the backbone (U-Net or Transformer) of the diffusion model is modified as \(h = W x + BAx,\) where \( W \in \mathbb{R}^{m \times m} \), \( A \in \mathbb{R}^{m \times k} \), and \( B \in \mathbb{R}^{k \times m} \), with \( k \ll m \). The LoRA weights are initialized to zero, ensuring no initial impact on the pre-trained model’s performance. This method reduces the number of parameters to be trained while achieving performance comparable to full-parameter fine-tuning. We apply LoRA with $k=16$ to all experiments.

\textbf{Gradient checkpointing} is a well-known technique for reducing memory usage during neural network training \citep{mem_eff_bptt, checkpoint}. Instead of storing all intermediate activations for backpropagation, it selectively saves only those needed for gradient computation and transfers the rest to the CPU's main memory. However, this comes with the cost of additional data transfer and computation overhead, which can increase training time. In the case of truncated backpropagation, checkpointing is unavoidable. For our RLR optimizer, though, gradient checkpointing is not a necessary technique.

\section{Hypeparameters}
All the experiments are conducted on a machine with 8 NVIDIA V100 GPUs. Each GPU has 32GB of memory.

For the Text2Image and the DCoT experiment, we use Adam optimizer with the learning rate of $5\times10^{-4}$. The batch size is $4$ and the gradient accumulation steps are $2$. The DDIM steps are $50$ and the classifier guidance weight is $7.5$. The local sub-chain has a length of $2$. We use Gaussian noise with a standard deviation of $1\times10^{-3}$ for perturbing the parameters.

For the Text2Video experiment, we use Adam optimizer with the learning rate of $1\times10^{-4}$. The batch size is $1$ and the gradient accumulation steps are $8$. The DDIM steps are $25$ and the classifier guidance weight is $7.5$. The local sub-chain has a length of $2$. We use Gaussian noise with a standard deviation of $1\times10^{-4}$ for perturbing the parameters.

\newpage

\section{Qualitative Results of Text2Image}\label{Qualitative Results of Text2Image}
\begin{figure*}[!h]
    \centering
    \includegraphics[width=\textwidth]{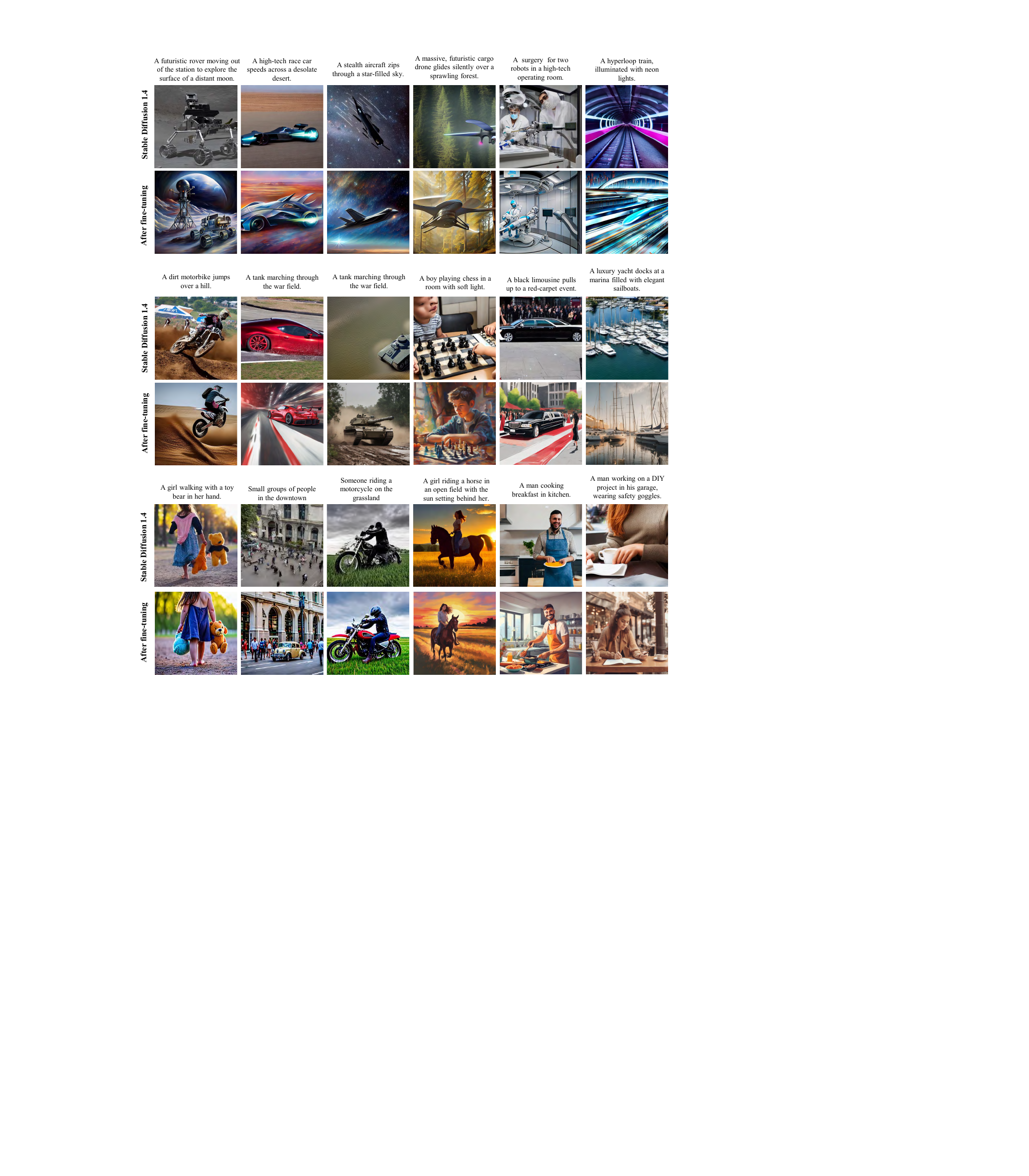}
    \vspace{-0.3cm}
    \caption{Qualitative results for Text2Image generation.}
    \label{fig:dcot}
    \vspace{-0.3cm}
\end{figure*}

\newpage

\section{Qualitative Results of Text2Video}\label{Qualitative Results of Text2Video}
\begin{figure*}[!h]
    \centering
    \includegraphics[width=\textwidth]{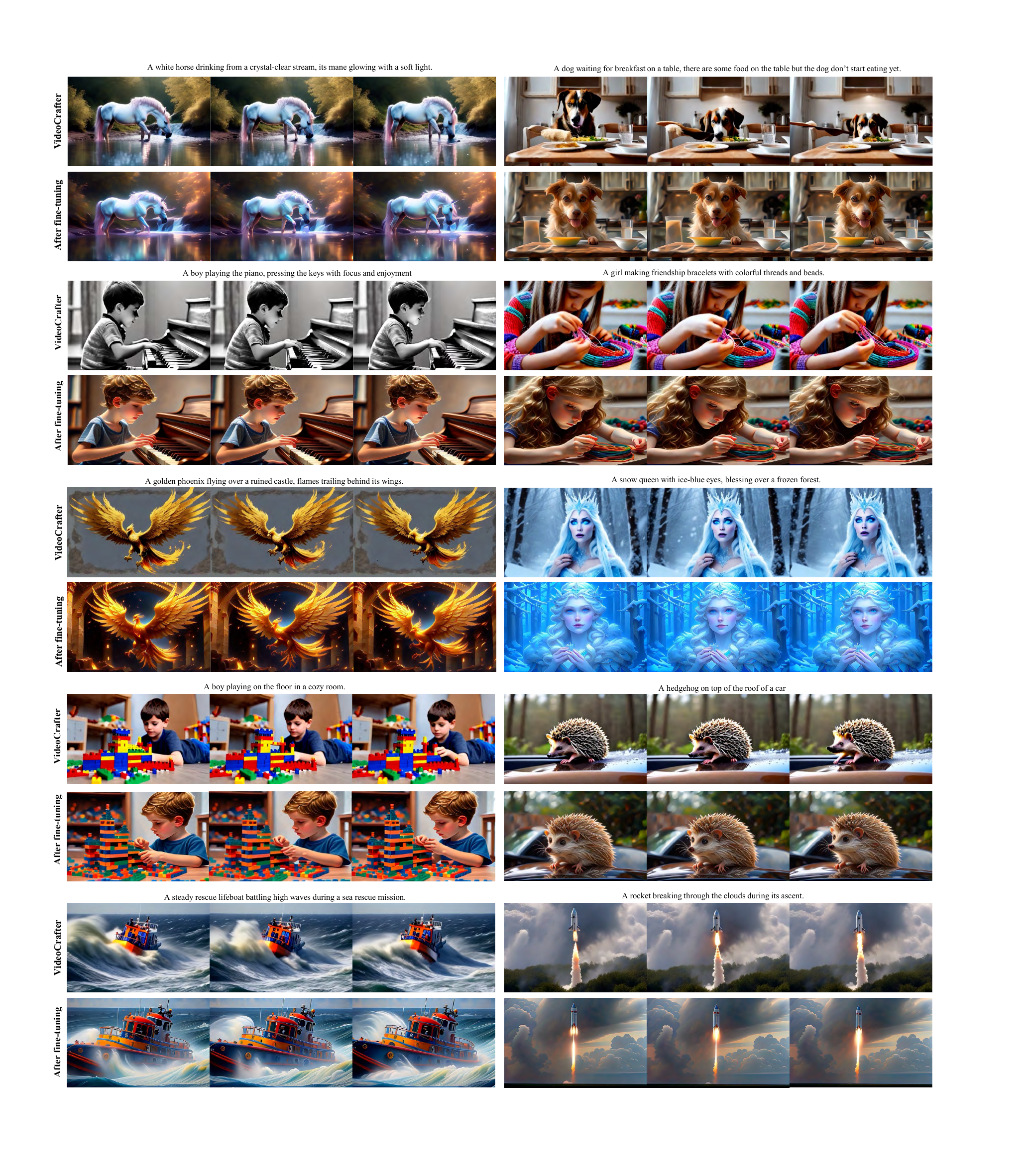}
    \vspace{-0.3cm}
    \caption{Qualitative results for Text2Video generation.}
    \label{fig:dcot}
    \vspace{-0.3cm}
\end{figure*}

\newpage

\section{Details for Diffusive Chain-of-Thought experiment}\label{detail_DCoT}
Original prompts for the hand task: (1) A realistic open palm facing upward. (2) A picture of a hand facing downward. (3) A hand in a relaxed position. (4) A hand. (5) A photo of a hand. The generated DCoT prompts are in Figure \ref{fig:dcot_prompts}. We show the instructions to generate the multi-scale DCoT prompts in Figure \ref{fig:dcot_instruct}. 

\begin{figure*}[h]
    \centering
    \includegraphics[width=0.8\textwidth]{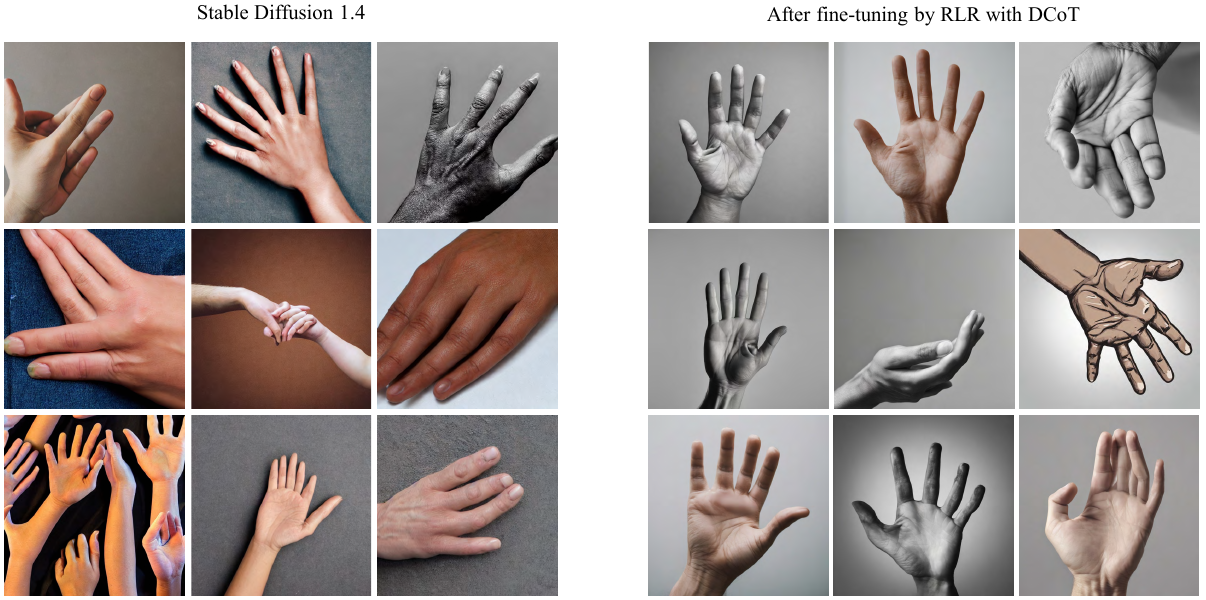}
    \caption{Qualitative results for the hand task in DCoT.}
    \label{fig:dcot}
    \vspace{-0.3cm}
\end{figure*}

\begin{figure*}[h]
    \centering
    \includegraphics[width=0.8\textwidth]{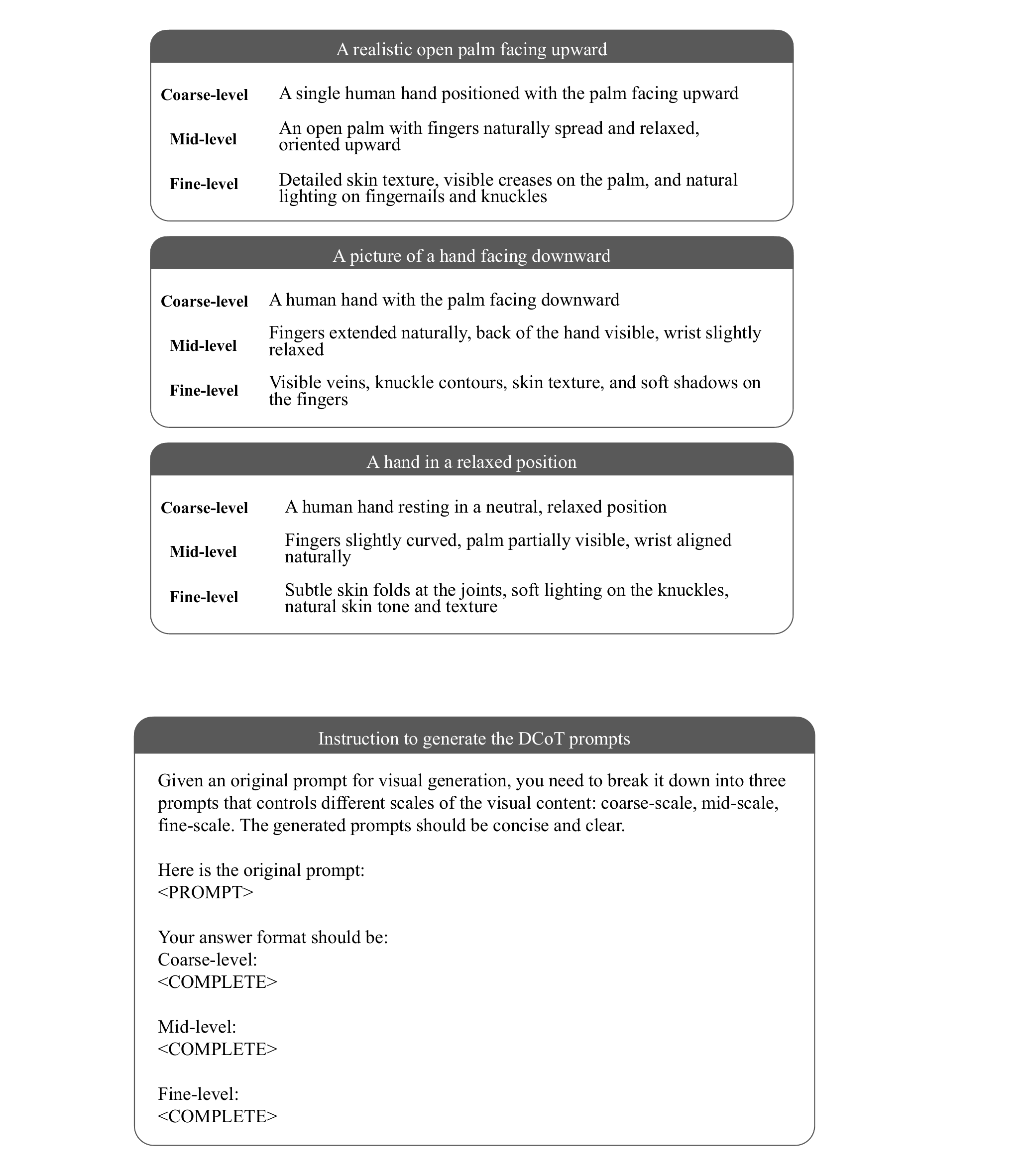}
    \caption{Qualitative examples for DCoT prompts}
    \label{fig:dcot_prompts}
    \vspace{-0.3cm}
\end{figure*}

\begin{figure*}[h]
    \centering
    \includegraphics[width=0.8\textwidth]{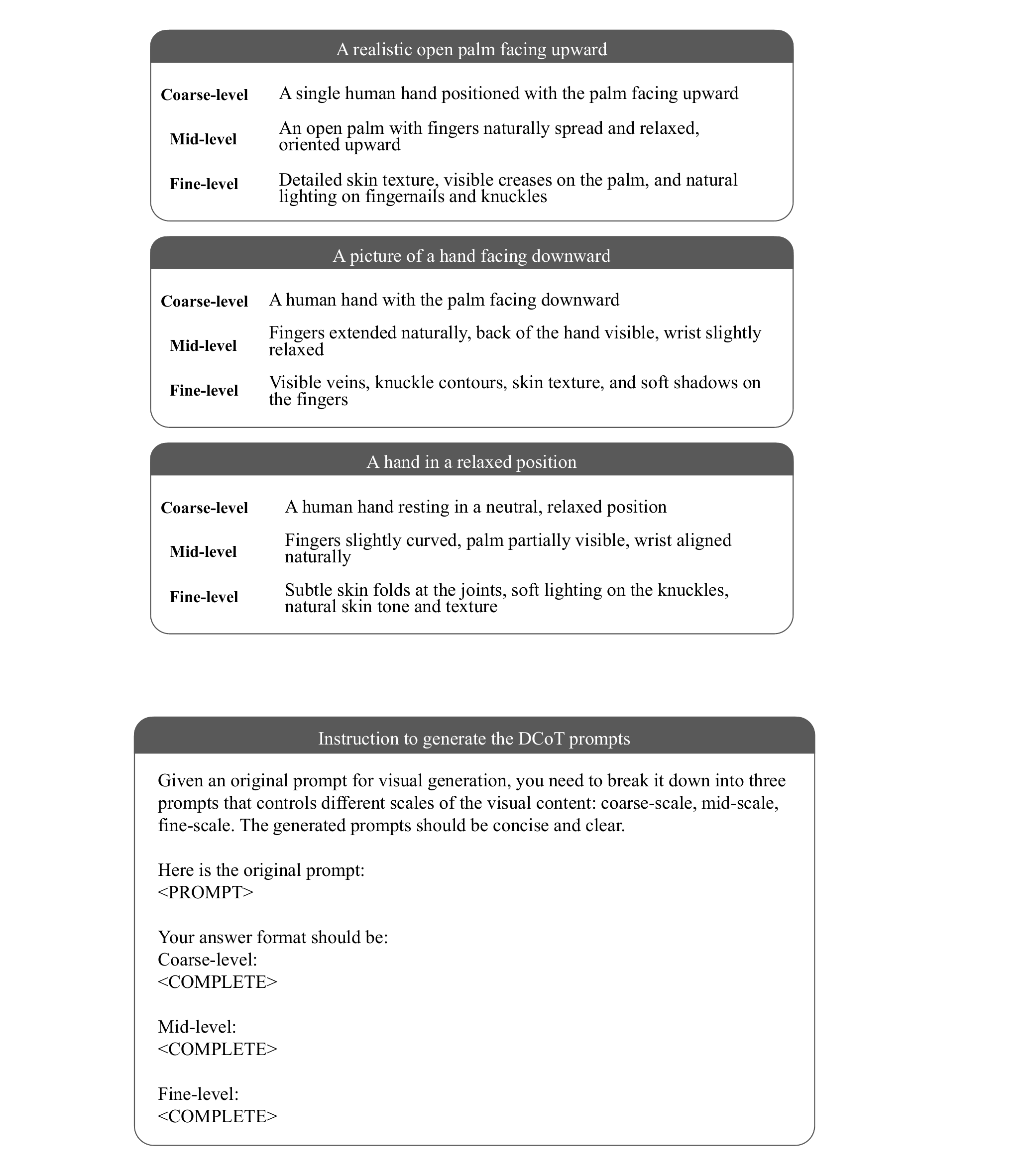}
    \caption{Instruction to generate DCoT prompts.}
    \label{fig:dcot_instruct}
    \vspace{-0.3cm}
\end{figure*}

\newpage

\section{Memory Profile, Time Complexity, and Different Diffusion Solvers}
We give the memory cost for the experiments with Text2Image on SD 1.4 in Table \ref{tab:mem}. We offload the memory to the CPU RAM to avoid the out-of-memory error. The AlignProp based on truncated BP has the largest memory consumption. The DDPO, based on RL, has the smallest consumption, while the sample efficiency is terrible, as shown in Figure \ref{fig:Sample_efficiency}. Our RLR has significantly lower memory consumption than the AlignProp.

\begin{table}[h]
    \centering
    \caption{Memory cost of Text2Image experiments on SD 1.4}
    \begin{tabular}{cccc}
    \toprule
    \textbf{Methods}  & \textbf{VRAM} & \textbf{System RAM} & \textbf{Total} \\
    \midrule
    DDPO & 12.4 GB & 0 GB & 12.4 GB \\
    AlignProp & 25.4 GB & 78.5 GB & 103.9 GB \\
    RLR & 22.4 GB & 0 GB & 22.4 GB  \\
    \bottomrule
    \end{tabular}    
    \label{tab:mem}
\end{table}

We report the per-step time cost for the truncated BP method, AlignProp, and compare the three methods in Table \ref{tab:time}. As shown in the following table, AlignProp has the highest per-step cost, while RL is the fastest. When considering the total time to reach the same performance level (AES = 5.4), the advantage of RLR becomes more evident. RL requires significantly more steps to converge due to high-variance gradients, whereas RLR achieves the target in fewer steps thanks to its low-variance property, resulting in faster convergence than both RL and AlignProp.

\begin{table}[h]
    \centering
    \caption{Time complexity of Text2Image experiments on SD 1.4}
    \begin{tabular}{cccc}
    \toprule
    \textbf{Methods}  & \textbf{RLR} & \textbf{RL} & \textbf{Alignprop} \\
    \midrule
    Time(min/step) & 1.61 & 0.82 & 2.85 \\
    Time(to same score) & 121 (75 steps) & 492 (600 steps) & 285 (100 steps) \\
    \bottomrule
    \end{tabular}    
    \label{tab:time}
\end{table}

We use RLR to train SD1.4 with the DPMSolverMultistepScheduler to verify its effectiveness on different samplers. We set the inference steps to 20 and the solver order to 2. The results in the Table \ref{tab:solvers} show that RLR maintains robust performance across both samplers. Diffusion-based generation methods such as DDIM are deterministic when $\eta=0$, and only become stochastic when $\eta>0$. In our experiments, when we require the HO estimator at a given step, we explicitly set $\eta=0.1$ for the step to perform HO. For all other steps, we retain the deterministic ODE solver structure by setting $\eta=0$. This approach allows us to flexibly combine deterministic and stochastic transitions within the same framework, ensuring the correctness of the HO estimator where needed. DPM-Solver supports both deterministic and stochastic sampling modes. Stochasticity can be selectively enabled at the specific steps where the HO estimator is used, with all other steps remaining deterministic.

\begin{table}[h]
    \centering
    \caption{Ablation of different diffusion solver in Text2Image experiments on SD 1.4}
    \begin{tabular}{ccccc}
    \toprule
    \textbf{Methods}  & \textbf{PickScore} & \textbf{HPSv2} & \textbf{AES} & \textbf{ImageReward} \\
    \midrule
    DDIM & 20.14 & 28.57 & 6.53 & 75.65 \\
    DPMSolver & 20.21 & 28.55 & 6.56 & 75.70 \\
    \bottomrule
    \end{tabular}    
    \label{tab:solvers}
\end{table}

\section{Ablations on the Sub-chain Length $h$}

\begin{table}[htbp]
\centering
\caption{Comparison of methods on HPSv2 and ImageReward with memory(GB) and time cost(minute per step).}
\begin{tabular}{lcccc}
\toprule
\textbf{Method} & \textbf{HPSv2} & \textbf{ImageReward} & \textbf{Memory} & \textbf{Wall clock time} \\
\midrule
ZO              & 22.31 & 40.82 & 11.8 & 0.79 \\
DDPO (RL)       & 22.79 & 52.06 & 12.4 & 0.82 \\
RLR($h=0$)      & 26.70 & 63.85 & 12.7 & 0.90 \\
RLR($h=1$)      & 28.02 & 69.85 & 18.8 & 1.15 \\
RLR($h=2$)      & 29.22 & 76.55 & 22.4 & 1.61 \\
RLR($h=3$)      & 29.36 & 76.62 & 32.6 & 5.65 \\
RLR($h=4$)      & 29.55 & 76.70 & 45.8 & 9.23 \\
\bottomrule
\end{tabular}
\label{tab:ablation_h}
\end{table}

We provide ablation over different sub-chain lengths $h$ and different estimators in the following two tables. We report the reward score from two models, memory, and wall clock time. As the $h$ increases, the performance increases due to the reduced variance. Increasing $h$ would also increase the computation overhead, including memory and time. We find that when $h$ is larger than 2, the performance gain is marginal compared to the increasing computation burden (high memory requirement and long wall clock time). When the $h$ is smaller than 2, high variance would lead to performance degradation. Therefore, we choose $h=2$ as the best hyperparameter in the main experiment.

\section{More Results on Vbench}
\paragraph{VBench Evaluation Dimensions.}
VBench evaluates video generation models across 16 disentangled dimensions, categorized into \textbf{Quality} and \textbf{Semantic} groups.

The \textbf{Quality} category consists of 7 metrics: \textbf{Subject Consistency} and \textbf{Background Consistency} measure identity and scene consistency across frames using feature similarity; \textbf{Motion Smoothness} evaluates physically plausible motion via a motion prior model; \textbf{Dynamic Degree} quantifies motion magnitude to penalize overly static videos; \textbf{Temporal Flickering} measures high-frequency temporal instability; \textbf{Aesthetic Quality} reflects perceptual appeal using an aesthetic predictor; and \textbf{Imaging Quality} assesses distortions like blur or noise using an image quality model. 

The \textbf{Semantic} category includes 9 metrics: \textbf{Object Class} and \textbf{Multiple Objects} test object presence and compositionality; \textbf{Human Action} verifies accurate motion execution; \textbf{Color}, \textbf{Spatial Relationship}, and \textbf{Scene} check fidelity to prompt-specified attributes and layouts; \textbf{Appearance Style} and \textbf{Temporal Style} assess stylistic alignment in space and time; and \textbf{Overall Consistency} captures general text-video correspondence. Each dimension has tailored prompts and automatic evaluation pipelines, ensuring fine-grained, human-aligned assessment.

According to the VBench protocol, the \textbf{Total Score}(TS) is computed as a weighted sum of the \textbf{Quality Score}(QS) and \textbf{Semantic Score}(SS), following the formula: $\text{TS} = 0.8 \cdot \text{QS} + 0.2 \cdot \text{SS}.$ In our evaluation, the proposed RLR method achieves the highest scores across all three levels: QS, SS, and the final TS. Notably, most metrics contributing heavily to QS are reported in the main text due to their strong correlation with perceptual quality and their larger weights in the TS calculation. For completeness, we present all 16 VBench metrics in Table~\ref{tab:vbench_16}, where RLR consistently outperforms existing baselines.

\begin{table}[htbp]
\centering
\caption{Automatic evaluation on VBench.}
\label{tab:vbench_16}

\begin{subtable}{\textwidth}
\centering
\caption{Quality dimensions and total score.}
\resizebox{\textwidth}{!}{
\begin{tabular}{lccccccccc}
\toprule
\multirow{2}{*}{\textbf{Method}} 
& \textbf{Subject} & \textbf{Background} & \textbf{Motion} & \textbf{Dynamic} 
& \textbf{Aesthetic} & \textbf{Imaging} & \textbf{Temporal} & \textbf{Quality} & \textbf{Total} \\
& \textbf{Consistency} & \textbf{Consistency} & \textbf{Smoothness} & \textbf{Degree} 
& \textbf{Quality} & \textbf{Quality} & \textbf{Flickering} & \textbf{Score} & \textbf{Score} \\
\midrule
ModelScopeT2V & 89.97 & 89.87 & 95.79 & 66.39 & 52.06 & 58.57 & 98.28 & 78.05 & 75.75 \\
Open-Sora     & 92.09 & 97.39 & 95.61 & 48.61 & 57.76 & 61.51 & 98.41 & 78.82 & 75.91 \\
Pika          & 96.76 & 98.95 & 99.51 & 37.22 & 63.15 & 62.33 & 99.77 & 82.68 & 80.40 \\
Gen-2         & 97.61 & 97.61 & 99.58 & 18.89 & 66.96 & 67.42 & 99.56 & 82.47 & 80.58 \\
T2V-Turbo     & 96.28 & 97.02 & 97.34 & 49.17 & 63.04 & 72.49 & 97.48 & 82.57 & 81.01 \\
\midrule
DDPO          & 95.53 & 96.63 & 96.92 & 58.29 & 59.23 & 66.84 & 97.63 & 81.43 & 79.84 \\
VADER         & 95.79 & 96.71 & 97.06 & 66.94 & 66.04 & 69.93 & 97.62 & 83.75 & 81.84 \\
RLR           & 97.64 & 97.19 & 98.05 & 70.69 & 66.15 & 71.08 & 97.70 & \textbf{85.20} & \textbf{83.14} \\
\bottomrule
\end{tabular}
}
\end{subtable}

\vspace{0.8em}

\begin{subtable}{\textwidth}
\centering
\caption{Semantic dimensions.}
\resizebox{\textwidth}{!}{
\begin{tabular}{lcccccccccc}
\toprule
\multirow{2}{*}{\textbf{Method}} 
& \textbf{Object} & \textbf{Multiple} & \textbf{Human} & \multirow{2}{*}{\textbf{Color}}
& \textbf{Spatial} & \multirow{2}{*}{\textbf{Scene}} & \textbf{Appearance} & \textbf{Temporal} & \textbf{Overall} & \textbf{Semantic} \\
& \textbf{Class} & \textbf{Objects} & \textbf{Action} &  
& \textbf{Relationship} &  & \textbf{Style} & \textbf{Style} & \textbf{Consistency} & \textbf{Score} \\
\midrule
ModelScopeT2V & 82.25 & 38.98 & 92.40 & 81.72 & 33.68 & 39.26 & 23.39 & 25.37 & 25.67 & 66.54 \\
Open-Sora     & 74.98 & 33.64 & 85.00 & 78.15 & 43.95 & 37.33 & 21.58 & 25.46 & 26.18 & 64.28 \\
Pika          & 87.45 & 46.69 & 88.00 & 85.31 & 65.65 & 44.80 & 21.89 & 24.44 & 25.47 & 71.26 \\
Gen-2      & 90.92 & 55.47 & 89.20 & 89.49 & 66.91 & 48.91 & 19.34 & 24.12 & 26.17 & 73.03 \\
T2V-Turbo  & 93.96 & 54.65 & 95.20 & 89.90 & 38.67 & 55.58 & 24.42 & 25.51 & 28.16 & 74.76 \\
\midrule
DDPO       & 91.36 & 45.67 & 94.81 & 90.22 & 37.54 & 55.37 & 25.27 & 25.04 & 28.03 & 73.47 \\
VADER      & 91.89 & 48.67 & 95.12 & 91.56 & 39.78 & 54.39 & 25.14 & 25.23 & 28.19 & 74.21 \\
RLR        & 92.52 & 50.22 & 95.20 & 92.31 & 40.33 & 54.18 & 25.74 & 25.35 & 28.27 & \textbf{74.87} \\
\bottomrule
\end{tabular}
}
\end{subtable}

\end{table}

\section{Proofs}
\subsection{Proof of Proposition \ref{prop1}}
\begin{proof}
We assume: $R:\mathbb{R}^d \to \mathbb{R}$ and $\phi:\mathbb{R}^d \times \mathbb{R}^m \times \Theta \to \mathbb{R}^d$ are continuously differentiable, and their gradients with respect to the targeted arguments are uniformly bounded. That is, there exist finite constants $M_R > 0$ and $M_{\phi} > 0$ such that $\sup_{x} | \nabla R(x) | \le M_R$, $\sup_{(x_t,z_t,\theta)} | \nabla_{(x_t,\theta)} \phi_t(x_t,z_t,\theta) | \le M_{\phi}$. Under these assumptions, for the composite function $R(\phi_{1:T}(x_T,z_{1:T},\theta))$, the partial derivative with respect to $\theta$ is also uniformly bounded by the chain rule (as a product of bounded terms). Therefore, the integrability condition required by the Dominated Convergence Theorem \citep{rudin1987real} is satisfied, which justifies interchanging the gradient and the expectation \citep{glasserman1990gradient}. 

 By chain rule and  $x_0=\phi_{1:T}(x_T,z_{1:T};\theta)=\phi_1 (x_{1},z_1;\theta)$, we have
\begin{equation*}
    \begin{aligned}
        \frac{\partial R(x_0)}{\partial\theta}&= \frac{\partial x_0}{\partial\theta}^{\top}\frac{dR(x_0)}{dx_0} =         \bigg[\frac{\partial\phi_1(x_{1},z_1;\theta)}{\partial \theta} + \frac{\partial x_0}{\partial x_{1}}\frac{\partial x_{1}}{\partial \theta}\bigg]^{\top}\frac{dR(x_0)}{dx_0} \\ &=\bigg[\frac{\partial \phi_1(x_1,z_1;\theta)}{\partial \theta} + 
        \frac{\partial x_0}{\partial x_{1}}\bigg[\frac{\partial\phi_2(x_{2},z_2;\theta)}{\partial \theta} + \frac{\partial x_1}{\partial x_{2}}\frac{\partial x_2}{\partial \theta}\bigg]\bigg]^{\top}\frac{dR(x_0)}{dx_0} \\ 
        &= \dots \\
        &=\bigg[\frac{\partial \phi_1(x_1, z_1;\theta)}  {\partial \theta} + 
        \sum_{i=2}^{T}\frac{\partial \phi_i(x_i,z_i;\theta)}{\partial \theta}\prod_{j=1}^{i-1}\frac{\partial x_{j-1}}{\partial x_j}\bigg]^{\top}\frac{dR(x_0)}{dx_0}.
    \end{aligned}
\end{equation*}
Therefore, we can reach the conclusion that
\begin{align*}
\nabla_\theta\mathbb{E}_z[R(x_0)] &= \mathbb{E}_{z_{1:T}}[\nabla_\theta R(\phi_1(x_1,z_1;\theta))]\nonumber\\
&=\mathbb{E}_{z_{1:T}}\bigg[\bigg[\frac{\partial \phi(x_1,z_1;\theta)}{\partial \theta} + \sum_{i=2}^{T}\frac{\partial \phi_i(x_i,z_i;\theta)}{\partial \theta}\prod_{j=1}^{i-1}\frac{\partial x_{j-1}}{\partial x_j}\bigg]^{\top}\frac{dR(x_0)}{dx_0} \bigg], 
\end{align*}
which means the unbiasedness of the FO estimator. Furthermore, the truncated BP estimator is
\begin{equation}\label{eq_truncatedBP}
    \begin{aligned}
\nabla_{\theta}R(\bm{x}_0)_{\mathrm{truncated}}=\bigg[\frac{\partial \phi_1(x_1,z_1;\theta)}{\partial \theta}+\sum_{i=2}^{T'}\frac{\partial \phi_i(x_i,z_i;\theta)}{\partial \theta}\prod_{j=1}^{i-1}\frac{\partial x_{j-1}}{\partial x_j}\bigg]^{\top}\frac{dR(x_0)}{dx_0}.
    \end{aligned}
\end{equation}
The structural bias of the truncated BP estimator can be specified by combining the above results:
\begin{equation*}
\begin{aligned}
\nabla_{\theta}\mathbb{E}[R(x_0)] - \mathbb{E}[\nabla_{\theta}R(x_0)_{\mathrm{truncated}}]= \mathbb{E}_{z_{1:T}} 
\bigg[
\sum_{i=T'+1}^{T}\frac{\partial \phi(x_i;\theta)}{\partial \theta}\prod_{j=1}^{i-1}\frac{\partial \phi(x_j;\theta)}{\partial x_j}\bigg]^{\top}\frac{dR(x_0)}{dx_0},
\end{aligned}
\end{equation*}
which completes the proof.
\end{proof}

\subsection{Proof of Proposition \ref{prop2} and Additional Assumptions}
\begin{assumption}\label{ass1}
Define $R(z;\theta)$ as $R(\phi_{1:T}(x_T,z_{1:T};\theta))$. Assume that $R(z;\theta)$ is differentiable with respect to $\theta$ almost surely, $\mathbb{P}(R(\theta)=r)=0$ for every $r\in \mathbb{R}$, and Lipschitz condition holds for every $\theta_1$ and $\theta_2$:
\begin{equation*}
    |R(z;\theta_1)-R(z;\theta_2)| \le m_1(z)|\theta_1-\theta_2|,
\end{equation*}
where $m_1(z)$ is integrable.
\end{assumption}

\begin{assumption}\label{ass2}
    For any $x_t$, whose randomness comes from $z$, the density $f(x_t;\theta)$  is differentiable with 
 respect to $\theta$, and uniform integrability holds:
 \begin{equation*}
\sup_{\theta}\bigg|R(z;\theta) \frac{\partial}{\partial \theta}f(x_t;\theta)\bigg| \le m_2(z),
 \end{equation*}
 where $m_2(z)$ is integrable.
\end{assumption}
\begin{assumption}\label{ass3}
$R(z;\theta)$ is  twice continuously differentiable. 
    The following functions are integrable: $m_1(\cdot)^2$, $m_2(\cdot)^2$, $\sup_{\theta}|R(\cdot;\theta)|\times m_1(\cdot)$, $\sup_{\theta}|R(\cdot;\theta)|\times \sup_{\theta}|R''(\cdot;\theta)|$.
\end{assumption}

\begin{proof}[Proof of Proposition \ref{prop2}]
Since $R(z;\theta)$ is the reward function and the random variables $z_{1:T}$ are Gaussian distributions in our case, it is easy to check that the above assumptions are satisfied. By applying Theorem 2 in \cite{cui2020variance}, we can reach the conclusion.
\end{proof}

\subsection{Proof of Theorem \ref{prop3}}
\begin{proof}
The RLR estimator contains three parts: FO estimator terms, HO estimator terms, and ZO estimator terms. For simplicity, we can consider the terms with an FO estimator term, an HO estimator term, and a ZO estimator term. Substituting the specific form of the iteration process, we have  $$x_0 = \phi(x_1, z_1; \theta),\quad x_1 = \varphi(x_2;\theta+z_2),\quad x_2=\varphi(x_3;\theta)+z_3,$$
where $x_1 = \varphi(x_2;\theta+z_2)$ is an ZO estimator term and $x_2=\varphi(x_3;\theta)+z_3$ is an HO estimator term. 
{\color{black}
We define $z_2\sim f_{\text{ZO}}(z)$ and $z_3\sim f_{\text{HO}}(z)$ to indicate the noise distribution associated with ZO and HO estimators. The overall gradient can be written as:
\begin{align}
   \nabla_{\theta}\mathbb{E}_{z_{1:3}}[R(x_0)] = \sum_{i=1}^3 \nabla_{\theta_i}\mathbb{E}_{z_{1:3}}[R(\phi(\varphi(\varphi(x_3;\theta_3)+z_3;\theta_2+z_2), z_1; \theta_1))]\bigg\vert_{\theta_1=\theta_2=\theta_3=\theta}
\end{align}

For the FO term, $\phi(x_1,z_1;\theta)$, the gradient is derived by applying the chain rule:
\begin{align*}
    \nabla_{\theta_1}\mathbb{E}_{z_{1:3}} &[R(\phi(\varphi(\varphi(x_3;\theta)+z_3;\theta+z_2), z_1; \theta_1))]\bigg\vert_{\theta_1=\theta} \\
    &= \nabla_{\theta_1}\mathbb{E}_{z_{1:3}}[R(\phi(x_1, z_1; \theta_1)]\bigg\vert_{\theta_1=\theta}
     = \mathbb{E}_{z_{1:3}}\bigg[D_{\bar\theta}^{\top}{\phi(x_1, z_1;\bar\theta)}\frac{dR(x_0)}{d x_0} \bigg]\bigg\vert_{\bar\theta=\theta}
\end{align*}
To derive the gradient of the ZO and HO terms, the key idea is to push the parameter of interest into the density function of the corresponding noise via a change of variables, so that the gradient operator only acts on the log-density function. We have the gradient of the ZO term:
\begin{align*}
    \nabla_{\theta_2}\mathbb{E}_{z_{1}, z_{2}, z_{3}} &[R(\phi(\varphi(\varphi(x_3;\theta)+z_3;\theta_2+z_2), z_1; \theta))]\bigg\vert_{\theta_2=\theta} \\&= \nabla_{\theta_2}\mathbb{E}_{z_{1}, z_{2}, z_{3}}[R(\phi(\varphi(x_2;\theta_2+z_2), z_1; \theta))]\bigg\vert_{\theta_2=\theta}\\
    & = \nabla_{\theta_2}\mathbb{E}_{z_{1}, z_{3}}[\mathbb{E}_{z_2\sim f_{\text{ZO}}(z)}[R(\phi(\varphi(x_2;\theta_2+z_2), z_1; \theta))\vert z_{1}, z_{3}]]\bigg\vert_{\theta_2=\theta}\\
    & = \nabla_{\theta_2}\mathbb{E}_{z_{1}, z_{3}}[\mathbb{E}_{v_2\sim f_{\text{ZO}}(v-\theta_2)}[R(\phi(\varphi(x_2;v_2), z_1; \theta))\vert z_{1}, z_{3}]]\bigg\vert_{\theta_2=\theta}\\
    & = \mathbb{E}_{z_{1}, z_{3}}[\mathbb{E}_{v_2\sim f_{\text{ZO}}(v-\theta_2)}[R(x_0)\nabla_{\theta_2}\ln f_{\text{ZO}}(v_2-\theta_2)\vert z_{1}, z_{3}]]\bigg\vert_{\theta_2=\theta}\\
    & = \mathbb{E}_{z_{1}, z_{3}}[\mathbb{E}_{z_2\sim f_{\text{ZO}}(z)}[-R(x_0)\nabla_{z}\ln f_{\text{ZO}}(z_2)\vert z_{1}, z_{3}]]\bigg\vert_{\theta_2=\theta} \\&= \mathbb{E}_{z_{1}, z_{2}, z_{3}}[-R(x_0)\nabla_{z}\ln f_{\text{ZO}}(z_2)],
\end{align*}
where the first equality follows from the tower property of expectations, the second from a change of variables, i.e.,$v_2=\theta+z_2$, the third uses the likelihood-ratio trick, and the fourth substitutes the variable back and collapses the expectation. Likewise, with the change of variables $v_3=\varphi(x_3;\theta_3)+z_3$, the derivation for the HO term follows analogously, yielding:
\begin{align*}
    \nabla_{\theta_3}\mathbb{E}_{z_{1}, z_{2}, z_{3}} &[R(\phi(\varphi(\varphi(x_3;\theta_3)+z_3;\theta+z_2), z_1; \theta))]\bigg\vert_{\theta_3=\theta}\\
    & = \nabla_{\theta_3}\mathbb{E}_{z_{1}, z_{2}} [\mathbb{E}_{z_{3}\sim f_{\text{HO}}(z)}[R(\phi(\varphi(\varphi(x_3;\theta_3)+z_3;\theta+z_2), z_1; \theta))\vert z_{1}, z_{2}]]\bigg\vert_{\theta_3=\theta}\\
    & = \nabla_{\theta_3}\mathbb{E}_{z_{1}, z_{2}} [\mathbb{E}_{v_{3}\sim f_{\text{HO}}(v-\varphi(x_3;\theta_3))}[R(\phi(\varphi(v_3;\theta+z_2), z_1; \theta))\vert z_{1}, z_{2}]]\bigg\vert_{\theta_3=\theta}\\
    & = \mathbb{E}_{z_{1}, z_{2}} [\mathbb{E}_{v_{3}\sim f_{\text{HO}}(v-\varphi(x_3;\theta_3))}[R(x_0)\nabla_{\theta_3}\ln f_{\text{HO}}(v_3-\varphi(x_3;\theta_3)) \vert z_{1}, z_{2}]]\bigg\vert_{\theta_3=\theta}\\
    & = \mathbb{E}_{z_{1}, z_{2}} \bigg[\mathbb{E}_{z_{3}\sim f_{\text{HO}}(z)}[-R(x_0)  D_{\theta_3}^\top \varphi(x_3;\theta_3) \nabla_{z}\ln f_{\text{HO}}(z_3) \vert z_{1}, z_{2}]\bigg]\bigg\vert_{\theta_3=\theta}\\
    & = \mathbb{E}_{z_{1}, z_{2}, z_{3}} \big[-R(x_0) D_{ \bar\theta}^\top \varphi(x_3;\bar\theta) \nabla_{z}\ln f_{\text{HO}}(z_3) \big]\bigg\vert_{\bar\theta=\theta}.
\end{align*}
Finally, we have the unbiased RLR gradient estimator:

\resizebox{\textwidth}{!}{%
\begin{minipage}{\textwidth}
\begin{align*}
   \nabla_{\theta}\mathbb{E}_{z_{1:3}}[R(x_0)] = \mathbb{E}_{z_{1:3}}\bigg[\frac{\partial \phi(x_1, z_1;\bar\theta)}{\partial \bar\theta}^{\top}\frac{dR(x_0)}{d x_0}- R(x_0)\bigg(\nabla_{z}\ln f_{\text{ZO}}(z_2) + \frac{\partial\varphi(x_3;\bar\theta)}{\partial \bar\theta}^\top \nabla_{z}\ln f_{\text{HO}}(z_3)\bigg) \bigg]\bigg\vert_{\bar\theta=\theta}.
\end{align*}
\end{minipage}
}
}

Since the sum of unbiased estimators is still unbiased, it is easy to generalize the result for any number of terms along the lines of the above proof. Also, the unbiasedness of the estimators does not change no matter how we pick the combination of the HO estimator and the ZO estimator. Therefore, we can reach the conclusion that:
\begin{align*}
   \nabla_{\theta}\mathbb{E}_{\bm{z}_{1:3}}[R(\bm{x}_0)] &= {D^{\top}_{\theta}\phi(x_1,z_1;\theta)}\frac{dR(x_0)}{dx_0}\\
   &-R(x_0){D^{\top}_\theta\phi_{j:j+h}(x_{j+h},z_{j:j+h};\theta)}\nabla_z\ln f(z_j)-\sum_{i \in C}R(x_0)\nabla_z\ln f(z_i),
\end{align*}
which completes the proof.
\end{proof}

\subsection{Variance of the RLR Estimator}\label{variance_RLR}
In this section, we discuss the variance of the RLR estimator, which consists of 3 terms, which are denoted as $A$, $B$, and $C$, respectively:
\begin{equation}
    \begin{aligned}
RLR=\underbrace{\frac{\partial\phi_1(x_1,z_1;\theta)}{\partial\theta}}_{A}-
&\underbrace{R(x_0)D^{\top}_\theta\phi_{j:j+h}(x_{j+h},z_{j:j+h};\theta)\nabla\ln f(z_j)}_{B}-\underbrace{\sum_{i \in C}R(x_0)\nabla\ln f(z_i)}_{C}.
    \end{aligned}
\end{equation}

It is easy to verify that the variance of RLR can be bounded by the variances of terms A, B and C:
\begin{equation}\label{eq_var_dec}
\begin{aligned}
    &\text{Var}(RLR)=\text{Var}(A + B + C)\\ &= \text{Var}(A) + \text{Var}(B) + \text{Var}(C) + 2\ \text{Cov}(A, B) + 2\ \text{Cov}(A, C) + 2\ \text{Cov}(B, C)\\
    &\leq \text{Var}(A) + \text{Var}(B) + \text{Var}(C)+2\bigg(\sqrt{\text{Var}(A)\text{Var}(B)}+\sqrt{\text{Var}(A)\text{Var}(C)}+\sqrt{\text{Var}(B)\text{Var}(C)} \bigg),
\end{aligned}  
\end{equation}
which gives an upper bound for the variance of the RLR estimator. Next we derive the specific expression of the terms $\text{Var}(A)$, $\text{Var}(B)$ and $\text{Var}(C)$.

First, the term \( A \) represents the original exact BP without any injected noise, so its variance is 0. 
Next, for the terms \( B \) and \( C \), since both are LR estimators, we present them in the following unified form for simplicity:
$$\mathbb{\eta}=\mathbb{E}[ R(z)\nabla_\theta \ln f(z,\theta)],$$
where $\nabla_\theta \ln f(z,\theta)=D^{\top}_\theta\phi_{j:j+h}(\bm{x}_{j+h};\theta)\nabla\ln f(\bm{z}_j)$ in the term B. The variance of $\eta$ is given by
$$\mathrm{Var}(\eta)=\mathbb{E}[(R(z)\nabla_\theta \ln f(z,\theta))^2]-\mu(\theta)^2,$$where $\mu(\theta)$ is the estimator mean.

To better characterize the variance, we now derive an alternative form of the gradient:
\begin{equation}\label{appendix_alter_form}
    \begin{aligned}
        \mathbb{E}[ R(z)\nabla_\theta \ln f(z,\theta)]&=\int \lim_{h\to 0}\frac{f(z;\theta+h)-f(z;\theta)}{h} R(z)dz\\
        &= \lim_{h\to 0}\int\frac{f(z;\theta+h)-f(z;\theta)}{h} R(z)dz\\
        &=\lim_{h\to 0}\frac{1}{h}\int f(z;\theta)\bigg(\frac{f(z;\theta+h)}{f(z;\theta)}-1\bigg)R(z)dz\\
        &=\lim_{h\to 0}\frac{1}{h}(\mathbb{E}[\omega(\theta,h)R(z)]-\mathbb{E}_{f}[R(z)]),
    \end{aligned}
\end{equation}
where the importance weight $\omega(\theta,h)=\frac{f(z;\theta+h)}{f(z;\theta)}$.

With this alternative form Equation (\ref{appendix_alter_form}), we have the variance of the LR estimator
\begin{equation}
    \begin{aligned}
        \mathrm{Var}(\eta)=\lim_{h\to0}\mathbb{E}[(w(\theta,h)-1)^2R(z)^2]-\mu(\theta)^2,
    \end{aligned}
\end{equation}
By the Hammersley-Chapman-Robbins bound, we can derive {\color{black}a lower bound} for the variance:
$$\mathrm{Var}(\eta)\geq \sup_{h}\frac{(\mu(\theta+h)-\mu(\theta)^2)}{\mathbb{E}[w(\theta,h)-1]^2},$$
which is a generalization of the more widely-known Cramer-Rao bound and describes the minimal
variance achievable by the estimator.

Under limited computational resources where full BP is infeasible, the only unbiased gradient estimation methods available are RLR, zeroth-order optimization, and RL. Compared to the latter two, RLR incorporates \( h \)-length half-order optimization and a single-step precise BP. According to Proposition \ref{prop2} and the subsequent discussion, it is evident that RLR achieves the lowest variance among these methods.

\subsection{\text{Proof of Theorem \ref{theo_convergence}}}
\begin{proof}
  Since \( R(\cdot) \) is \( L \)-smooth, we have
\begin{equation}
    \begin{aligned}
E[R(\theta_{k+1}) | \mathcal{F}_k] &= R(\theta_k) + \mathbb{E}\left[ \langle \nabla R(\theta_k),\theta_{k+1}-\theta_k\rangle| \mathcal{F}_k \right] + \frac{L}{2} \mathbb{E}\left[ \| \theta_{k+1} - \theta_k \|^2 | \mathcal{F}_k \right]\\        
&= R(\theta_k) -\gamma\mathbb{E}\left[ \langle \nabla R(\theta_k),\nabla R(\theta_k)+\epsilon\rangle | \mathcal{F}_k \right] + \frac{L\gamma^2}{2} \mathbb{E}\left[ \|\nabla R(\theta_k)+\epsilon \|^2 | \mathcal{F}_k \right]\\
&\leq R(\theta_k)-\gamma(1-\frac{L\gamma}{2})\|\nabla R(\theta_k)\|^2+\frac{L\gamma^2\sigma_{\mathrm{RLR}}^2}{2}\\
&\leq R(\theta_k)-\frac{\gamma}{2}\|\nabla R(\theta_k)\|^2+\frac{L\gamma^2\sigma{_{\mathrm{RLR}}}^2}{2},
\end{aligned}
\end{equation}
where the last inequality holds if \( \gamma \leq 1/L \). By taking expectations over the filtration \( \mathcal{F}_k \), we have
\[
\mathbb{E}[R(\theta_{k+1})] \leq \mathbb{E}[R(\theta_k)] -\frac{\gamma}{2}\mathbb{E}[\nabla R(\theta_k)^2] + \frac{L\gamma^2\sigma^2_{\mathrm{RLR}}}{2},
\]
which is equivalent to
\[
\mathbb{E}[\nabla R(\theta_k)^2]\leq\frac{2}{\gamma}(\mathbb{E}(R(\theta_{k}))-\mathbb{E}(R(\theta_{k+1})))+\gamma L\sigma^2_{\mathrm{RLR}}.
\]

Taking the average over \( k = 0, 1, \dots, K \), we have
\[
\frac{1}{K+1} \sum_{k=0}^K [\mathbb{E}[\nabla R(\theta_k)^2]] \leq  \frac{2(R(x_0) - R^\ast)}{\gamma(K+1)} + \gamma L \sigma^2_{\mathrm{RLR}}.
\]
Defining \( \Delta_0 := R(x_0) - R^\ast \), if we set the step size
\[
\gamma= \bigg[\bigg(\frac{2 \Delta_0}{(K+1) L\sigma^2_{\mathrm{RLR}}}\bigg)^{-\frac{1}{2}} + L\bigg]^{-1},
\]
then we have
$$\frac{1}{K+1}\sum_{k=0}^K\mathbb{E}(\|\nabla R(\theta_k)\|^2)\leq \sqrt{\frac{8L\Delta_0\sigma^2_{\mathrm{RLR}}}{K+1}}+\frac{2L\Delta_0}{K+1},$$
which completes the proof.
\end{proof}

\subsection{Solution of Optimization Problem (\ref{equ:var_obj})}

\paragraph{Quadratic Upper Bound on Variance.}
Using the variance decomposition bound (\ref{eq_var_dec}), the objective of the problem (\ref{equ:var_obj}) can be upper bounded by a quadratic function of $h$:
\begin{equation}
 Q(h) = a (h+1)^2 + b (h+1) + c,
\end{equation}
where the coefficients $a, b, c$ are given by:
\begin{align*}
a &= V_h^2 + V_z^2 - 2 {V_h V_z}>0, \\
b &= -2TV^2_z+2T{V_hV_z}, 
\end{align*}

Then, the unconstrained optimal solution $h^\star$ is given by:
\begin{equation}
h_{\mathrm{unc}}^\star 
= \frac{
T(V_z^2-{V_h V_z})
}{
2(V_h^2 + V^2_z - 2{V_h V_z})
}-1.
\end{equation}

Where $h$ and $(T-1-h)$ are the number of steps for HO and ZO; $V^2_h$ and $V^2_z$ are coefficients indicating the magnitude of the variance of HO and ZO per step, and $V_h \ll V_z$ since HO has lower variance. Notice that $T>2$. Therefore, $h_{\mathrm{unc}}^\ast>0$. $\mathcal{B}_h$ and $\mathcal{B}_z$ are coefficients indicating the magnitude of the memory cost of HO and ZO per step and $\mathcal{B}_h>\mathcal{B}_z$. $\mathcal{B}_z T\le \mathcal{B} \le \mathcal{B}_h T.$
The constraint of the problem (\ref{equ:var_obj}) implies that $h \leq \frac{\mathcal{B} - \mathcal{B}_z (T-1)}{\mathcal{B}_h - \mathcal{B}_z},$ and $\frac{\mathcal{B} - \mathcal{B}_z (T-1)}{\mathcal{B}_h - \mathcal{B}_z}>0$. 

Then, the solution to the optimization problem is given by: 
$$\min\{\lfloor\frac{\mathcal{B} - \mathcal{B}_z (T-1)}{\mathcal{B}_h - \mathcal{B}_z}\rfloor,\lfloor \frac{
T(V^2_z-{V_h V_z})
}{
2(V_h^2 + V_z^2 - 2{V_h V_z})
}-1\rfloor\}.$$

\end{document}